\icmltitlerunning{Iterative Machine Teaching}
\begin{document}

\twocolumn[
\icmltitle{Iterative Machine Teaching}

\begin{icmlauthorlist}
\icmlauthor{Weiyang Liu}{gt}
\icmlauthor{Bo Dai}{gt}
\icmlauthor{Ahmad Humayun}{gt}
\icmlauthor{Charlene Tay}{iu}
\icmlauthor{Chen Yu}{iu}\\
\icmlauthor{Linda B. Smith}{iu}
\icmlauthor{James M. Rehg}{gt}
\icmlauthor{Le Song}{gt}
\end{icmlauthorlist}

\icmlaffiliation{gt}{Georgia Institute of Technology}
\icmlaffiliation{iu}{Indiana University}

\icmlcorrespondingauthor{Weiyang Liu}{wyliu@gatech.edu}
\icmlcorrespondingauthor{Le Song}{lsong@cc.gatech.edu}

\icmlkeywords{Iterative Machine Teaching}

\vskip 0.3in
]

\printAffiliationsAndNotice{} 

\begin{abstract}
In this paper, we consider the problem of machine teaching, the inverse problem of machine learning. Different from traditional machine teaching which views the learners as batch algorithms, we study a new paradigm where the learner uses an iterative algorithm and a teacher can feed examples sequentially and intelligently based on the current performance of the learner. We show that the teaching complexity in the iterative case is very different from that in the batch case. Instead of constructing a minimal training set for learners, our iterative machine teaching focuses on achieving fast convergence in the learner model. Depending on the level of information the teacher has from the learner model, we design teaching algorithms which can provably reduce the number of teaching examples and achieve faster convergence than learning without teachers. We also validate our theoretical findings with extensive experiments on different data distribution and real image datasets.
\end{abstract}
\vspace{-7mm}
\section{Introduction}
\setlength{\abovedisplayskip}{4pt}
\setlength{\abovedisplayshortskip}{1pt}
\setlength{\belowdisplayskip}{4pt}
\setlength{\belowdisplayshortskip}{1pt}
\setlength{\jot}{3pt}
\setlength{\textfloatsep}{4pt}

Machine teaching is the problem of constructing an optimal (usually minimal) dataset according to a target concept such that a student model can learn the target concept based on this dataset. Recently, there is a surge of interests in machine teaching which has found diverse applications in model compression~\cite{bucila2006model,han2015deep,ba2014deep,romero2014fitnets}, transfer learning \cite{pan2010survey} and cyber-security problems~\cite{alfeld2016data,alfeld2017explicit,mei2015using}. Furthermore, machine teaching is also closely related to other subjects of interests, such as curriculum learning~\cite{bengio2009curriculum} and knowledge distilation~\cite{hinton2015distilling}.

In the traditional machine learning paradigm, a teacher will typically construct a batch set of examples, and provide them to a learning algorithm in one shot; then the learning algorithm will work on this batch dataset trying to learn the target concept. Thus, many research work under this topic try to construct the smallest such dataset, or characterize the size of of such dataset, called the teaching dimension of the student model~\cite{zhu2013machine,zhu2015machine}. There are also many seminal theory work on analyzing the teaching dimension of different models~\cite{shinohara1991teachability,goldman1995complexity,doliwa2014recursive,liu2016teaching}.

\begin{figure}[t]
  \centering
  \footnotesize
  \includegraphics[width=2.5in]{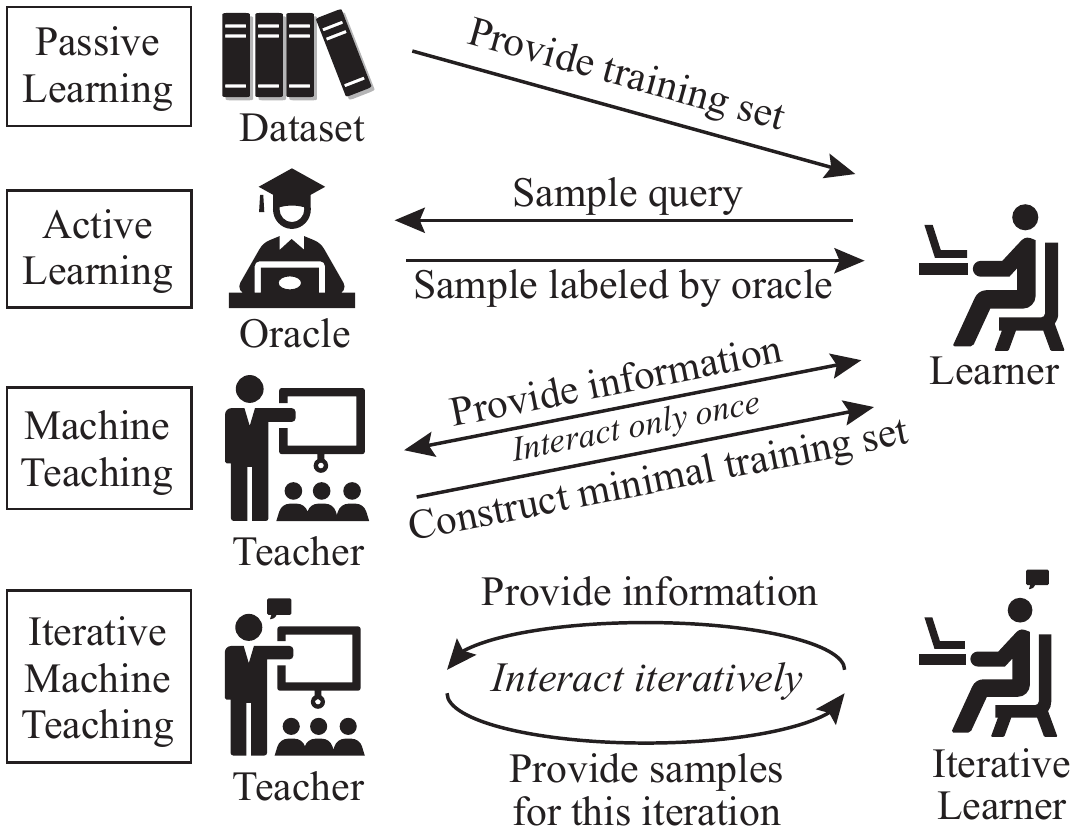}
  \vspace{-2.7mm}
  \caption{Comparison between iterative machine teaching and the other learning paradigms.}\label{fig1}
  \vspace{1.5mm}
\end{figure}
\vspace{-1mm}
However, in many real world applications, the student model is typically updated via an iterative algorithm, and we get the opportunity to observe the performance of the student model as we feed examples to it. For instance,

\vspace{-1.3mm}
\begin{itemize}[leftmargin=*,nosep,nolistsep]
 \item In model compression where we want to transfer a target ``teacher model'' to a destination ``student
  model'', we can constantly observe student model's prediction on current training points. Intuitively, such observations will allow us to get a better estimate where the student model is and pick examples more intelligently to better guide the student model to convergence.
 \item In cyber-security setting where an attack wants to mislead a recommendation system that learns online, the attacker can constantly generate fake clicks and observe the system's response. Intuitively, such feedback will allow the attacker to figure out the state of the learning system, and design better strategy to mislead the system.
\end{itemize}
\vspace{-1.3mm}

From the aspects of both faster model compression and better avoiding hacker attack, we seek to understand some fundamental questions, such as, \emph{what is the sequence of examples that teacher should feed to the student in each iteration in order to achieve fast convergence? And how many such examples or such sequential steps are needed?}

In this paper, we will focus on this new paradigm, called \emph{\textbf{iterative machine teaching}}, which extends traditional machine teaching from batch setting to iterative setting. In this new setting, the teacher model can communicate with and influence the student model in multiple rounds, but the student model remains passive. More specifically, in each round, the teacher model can observe (potentially different levels of) information about the students to intelligently choose one example, and the student model runs a fixed iterative algorithm using this chosen example.
\par
Furthermore, the smallest number of examples (or rounds) the teacher needs to construct in order for the student to efficiently learn a target model is called the \textit{\textbf{iterative teaching dimension}} of the student algorithm. Notice that in this new paradigm, we shift from describing the complexity of a model to the complexity of an algorithm. Therefore, for the same student model, such as logistic regression, the iterative teaching dimension for a teacher model can be different depending on the student's learning algorithms, such as gradient descent versus conjugate gradient descent. In some sense, the teacher in this new setting is becoming active, but not the student. In Fig.~\ref{fig1}, we summarize the differences of iterative machine teaching from traditional machine teaching, active learning and passive learning.
\par
Besides introducing the new paradigm, we also propose three iterative teaching algorithms, called omniscient teacher, surrogate teacher and imitation teacher, based on the level of information about the student that the teacher has access to. Furthermore we provide partial theoretical analysis for these algorithms under different example construction schemes. Our analysis shows that under suitable conditions, iterative teachers can always perform better than passive teacher, and achieve exponential improvements. Our analysis also identifies two crucial properties, namely teaching monotonicity and teacher capability, which play critical roles in achieving fast iterative teaching.

To corroborate our theoretical findings, we also conduct extensive experiments on both synthetic data and real image data. In both cases, the experimental results verify our theoretical findings and the effectiveness of our proposed iterative teaching algorithms.

\vspace{-2.5mm}
\section{Related Work}
\vspace{-0.5mm}
\textbf{Machine teaching.} Machine teaching problem is to find an optimal training set given a student model and a target. \cite{zhu2015machine} proposes a general teaching framework. \cite{zhu2013machine} considers Bayesian learner in exponential family and expresses the machine teaching as an optimization problem over teaching examples that balance the future loss of the learner and the effort of the teacher. \cite{liu2016teaching} provides the teaching dimension of several linear learners. The framework has been applied to security \cite{mei2015using}, human computer interaction \cite{meek2016analysis} and education \cite{khan2011humans}. \cite{JohnsCVPR2015} further extends machine teaching to interactive settings. However, these work ignores the fact that a student model is typically learned by an iterative algorithm, and we usually care more about how fast the student can learn from the teacher.
\par
\vspace{-0.8mm}
\textbf{Interactive Machine Learning.} \cite{cakmak2014eliciting} consider the scenario of a human training an agent to perform a classification task by showing examples. They study how to improve human teacher by giving teaching guidance. \cite{singla2014near} consider the crowdsourcing problem and propose a sequential teaching algorithm that can teach crowd worker to better classify the query. Both work consider a very different setting where the learner (i.e. human learner) is not iterative and does not have a particular optimization algorithm.
\par
\vspace{-0.8mm}
\textbf{Active learning.} Active learning enables a learner to interactively query the oracle to obtain the desired outputs at new samples.
Machine teaching is different from active learning in the sense that active learners explore the optimal parameters by itself rather than being guided by the teacher. Therefore they have different sample complexity \cite{balcan2010true,zhu2013machine}.
\par
\vspace{-0.8mm}
\textbf{Curriculum learning.} Curriculum learning \cite{bengio2009curriculum} is a general training strategy that encourages to input training examples from easy ones to difficult ones. Very interestingly, our iterative teacher model suggests similar training strategy in our experiments.
\vspace{-2.3mm}
\section{Iterative Machine Teaching}
\vspace{-0.7mm}
The proposed iterative machine teaching is a general concept, and the paper considers the following settings:
\par
\vspace{-0.8mm}
\textbf{Student's Asset.} In general, the asset of a student (learner) includes the initial parameter $w_0$, loss function, optimization algorithm, representation (feature), model, learning rate $\eta_t$ over time (and initial $\eta_0$) and the trackability of the parameter $w^t$. The ideal case is that a teacher has access to all of them and can track the parameters and learning rate, while the worst case is that a teacher knows nothing. How practical the teaching is depends on how much the prior knowledge and trackability that a teacher has.
\par
\vspace{-0.8mm}
\textbf{Representation.} The teacher represents an example as $(x, y)$ while the student represents the same example as $(\widetilde x, \widetilde y)$ (typically $\thickmuskip=2mu \medmuskip=2mu y=\widetilde y$). The representation $\thickmuskip=2mu \medmuskip=2mu x\in\mathcal{X}$ and $\thickmuskip=2mu \medmuskip=2mu \widetilde x\in\widetilde\Xcal$ can be different but deterministically related. We assume there exists $\thickmuskip=2mu \medmuskip=2mu \widetilde x = \mathcal{G}(x)$ for an unknown invertible mapping $\mathcal{G}$.
\par
\vspace{-0.8mm}
\textbf{Model.} The teacher uses a linear model $\thickmuskip=2mu \medmuskip=2mu y=\inner{v}{x} $ with parameter $v^*$ ($w^*$ for student's space) that is taught to the student. The student also uses a linear model $\thickmuskip=0mu \medmuskip=0mu \widetilde y = \inner{w}{\widetilde x}$ with parameter $w$, i.e., $\thickmuskip=0mu \medmuskip=0mu \widetilde y = \inner{w}{\Gcal(x)} = f(x)$ in general. $w$ and $v$ do not necessarily lie in the same space, but for omniscient teacher, they are equivalent and interchangeably used.
\par
\textbf{Teaching protocol.} In general, the teacher can only communicate with the student via examples. In this paper, the teacher provides one example $x^t$ in one iteration, where $t$ denotes the $t$-th iteration. The goal of the teacher is to provide examples in each iteration such that the student parameter $w$ converge to its optimum $w^*$ as fast as possible.
\par
\textbf{Loss function.} The teacher and student share the same loss function. We assume this is a convex loss function $\ell(f(x), y)$, and the best model is usually found by minimizing the expected loss below:
\begin{equation}
\footnotesize
\begin{aligned}\label{eq:update}
  w^* = \arg\min_{w}~\EE_{(x,y)} \sbr{\ell(\inner{w}{x}, y)}.
\end{aligned}
\end{equation}
where the sampling distribution $\thickmuskip=2mu \medmuskip=2mu (x,y)\sim \PP(x,y)$. Without loss of generality, we only consider typical loss functions, such as square loss $\thickmuskip=2mu \medmuskip=2mu \frac{1}{2}(\inner{w}{x} - y)^2$, logistic loss $\thickmuskip=2mu \medmuskip=2mu \log(1 + \exp(-y \inner{w}{x}))$ and hinge loss $\thickmuskip=2mu \medmuskip=2mu \max(1-y\inner{w}{x},0)$.
\par
\textbf{Algorithm.} The student uses the stochastic gradient descent to optimize the model. The iterative update is
\begin{equation}
\footnotesize
\begin{aligned}
  w^{t+1} = w^t - \eta_t\, \frac{\partial \ell(\inner{w}{x},y)}{\partial w}.
\end{aligned}
\end{equation}
Without teacher's guiding, the student can be viewed as being guided by a random teacher who randomly feed an example to the student in each iteration.

\vspace{-2.55mm}
\section{Teaching by an Omniscient Teacher}
\vspace{-1mm}
An omniscient teacher has access to the student's feature space, model, loss function and optimization algorithm. In specific, omniscient teacher's $(x,y)$ and student's $(\xtil,\ytil)$ share the same representation space, and teacher's optimal model $v^*$ is also the same as student's optimal model $w^*$.
\vspace{-2.65mm}
\subsection{Intuition and teaching algorithm}
\vspace{-1.1mm}
In order to gain intuition on how to make the student model converge faster, we will start with looking into the difference between the current student parameter and the teacher parameter $w^*$ during each iteration:
\begin{equation}
\footnotesize
\begin{aligned}\label{graddecomp}
  &\nbr{w^{t+1} - w^\ast}_2^2= \nbr{w^t - \eta_t\, \frac{\partial \ell(\inner{w}{x},y)}{\partial w} - w^\ast}_2^2 \\[-1.4mm]
  =& \nbr{w^t - w^\ast}_2^2 + \eta_t^2 \underbrace{\nbr{\frac{\partial \ell(\inner{w^t}{x},y)}{\partial w^t}}_2^2}_{T_1(x,y|w^t): \text{Difficulty of an example $(x,y)$}} \\[-0.8mm]
  &- 2\eta_t \underbrace{\inner{w^t - w^\ast}{\frac{\partial \ell(\inner{w^t}{x},y)}{\partial w^t}}}_{T_2(x,y|w^t): \text{Usefulness of an example $(x,y)$}}
\end{aligned}
\end{equation}
Based on the decomposition of the parameter error, the teacher aims to choose a particular example $(x,y)$ such that $\thickmuskip=2mu \medmuskip=2mu \|w^{t+1} - w^\ast\|_2^2$ is most reduced compared to $\thickmuskip=2mu \medmuskip=2mu \|w^t - w^\ast\|_2^2$ from the last iteration. Thus the general strategy for the teacher is to choose an example $(x,y)$, such that $\thickmuskip=2mu \medmuskip=2mu \eta_t^2 T_1 - 2\eta_t T_2$ is minimized in the $t$-th iteration:
\begin{equation}\label{T1T2_min}
\begin{aligned}
\argmin_{x\in\Xcal,y\in\Ycal}\eta_t^2 T_1(x,y|w^t) - 2\eta_t T_2(x,y|w^t).
\end{aligned}
\end{equation}
The teaching algorithm of omniscient teacher is summarized in Alg.1. The smallest value of $\thickmuskip=2mu \medmuskip=2mu \eta_t^2 T_1 - 2\eta_t T_2$ is $-\|w^t-w^*\|_2^2$. If the teacher achieves this, it means that we have reached the teaching goal after this iteration. However, it usually cannot be done in just one iteration, because of the limitation of teacher's capability to provide examples. $T_1$ and $T_2$ have some nice intuitive interpretations:
\par
{\bf Difficulty of an example.} $T_1$ quantifies the difficulty level of an example. This interpretation for different loss functions becomes especially clear when the data lives on the surface of a sphere,~\ie, $\thickmuskip=2mu \medmuskip=2mu \nbr{x} = 1$. For instance,
\vspace{-2.65mm}
\begin{itemize}[leftmargin=*,nosep,nolistsep]
  \item For linear regression, $\thickmuskip=2mu \medmuskip=2mu T_1 = (\inner{w}{x} - y)^2$. The larger the norm of gradient is, the more difficult the example is.
  \item For logistic regression, we have $\thickmuskip=2mu \medmuskip=2mu T_1 = \|\frac{1}{1+\exp(y\inner{w}{x})}\|_2^2$. We know that $\frac{1}{1+\exp(y\inner{w}{x})}$ is the probability of predicting the wrong label. The larger the number is, the more difficult the example is.
  \item For support vector machines, we have $\thickmuskip=2mu \medmuskip=2mu T_1 = \frac{1}{2}(\textnormal{sign}(1-y\inner{w}{x})+1) $. Different from above losses, the hinge loss has a threshold to identify the difficulty of examples. While the example is difficult enough, it will produce $1$. Otherwise it is $0$.
\end{itemize}
\vspace{-2.65mm}
Interestingly, the difficulty level is not related to the teacher $w^*$, but is based on the current parameters of the learner $w^t$. From another perspective, the difficulty level can also be interpreted as the information that an example carries. Essentially, a difficult example is usually more informative. In such sense, our difficulty level has similar interpretation to curriculum learning, but with different expression.
\par
{\bf Usefulness of an example.} $T_2$ quantifies the usefulness of an example. Concretely, $T_2$ is the correlation between discrepancy $w^t - w^\ast$ and the information (difficulty) of an example. If the information of the example has large correlation with the discrepancy, it means that this example is very useful in this teaching iteration.
\par
{\bf Trade-off.} Eq.\eqref{T1T2_min} aims to minimize the difficulty level $T_1$ and maximize the usefulness $T_2$. In other word, the teacher always prefers easy but useful examples. When the learning rate is large, $T_1$ term plays a more important role. When learning rate is small, $T_2$ term plays a more important role. This suggests that initially the teacher should choose easier examples to feed into the student model, and later on the teacher should choose examples to focus more on reducing the discrepancy between $w^t - w^\ast$. Such examples are very likely the difficult ones. Even if the learning rate is fixed, the gradient $\nabla_{w}\ell$ is usually large for a convex loss function at the beginning, so reducing the difficulty level (choosing easy examples) is more important. While near the optimum, the gradient $\nabla_{w}\ell$ is usually small, so $T_2$ becomes more important. It is also likely to choose difficult examples. It has nice connection with curriculum learning (easy example first and
difficult later) and boosting (gradually focus on difficult examples).
\vspace{-2.4mm}
\subsection{Teaching monotonicity and universal speedup}
\vspace{-0.9mm}
Can the omniscient teacher always do better than a teacher who feed random examples to the student (in terms of convergence)? In this section, we identify generic conditions under which we can guarantee that the iterative teaching algorithm always perform better than random teacher.
\vspace{-1.5mm}
\begin{definition}[Teaching Volume]
For a specific loss function $\ell$, we first define a teaching volume function $TV(w)$ with model parameter $w$ as
\begin{equation}
\footnotesize
\begin{aligned}
TV(w)=\max_{x\in\mathcal{X},y\in\mathcal{Y}}\{-\eta_t^2 T_1(x,y|w) + 2\eta_t T_2(x,y|w)\}
\end{aligned}
\end{equation}
\end{definition}
\vspace{-2.3mm}
\begin{theorem}[Teaching Monotonicity]\label{thm:fast_teaching}
Given a training set $\mathcal{X}$ and a loss function $\ell$,  if the inequality
\begin{equation}
\footnotesize
\begin{aligned}
\nbr{w_1-w^*}^2-TV(w_1)\leq \nbr{w_2-w^*}^2 - TV(w_2)
\end{aligned}
\end{equation}
holds for any $w_1,w_2$ that satisfy $\thickmuskip=2mu \medmuskip=2mu \|w_1-w^*\|^2\leq\|w_2-w^*\|^2$, then with the same parameter initialization and learning rate, the omniscient teacher can always converge not slower than random teacher.
\end{theorem}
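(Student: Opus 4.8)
The plan is to collapse each teacher's trajectory to a single scalar potential and then run a one-line induction. Write $g(w):=\|w-w^*\|^2-TV(w)$; by the definition of the teaching volume, $g(w)$ is exactly the smallest squared parameter error attainable after one SGD step from state $w$. Let $a_t:=\|w^t-w^*\|^2$ be this error along the omniscient run and $b_t:=\|\bar w^t-w^*\|^2$ along a random (in fact arbitrary) run, both started from the common initialization so that $a_0=b_0$, with a common learning rate. The error decomposition~\eqref{graddecomp} says that one SGD step on an example $(x,y)$ turns the current squared error $r$ into $r+\eta_t^2 T_1(x,y|w^t)-2\eta_t T_2(x,y|w^t)$. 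Since the omniscient teacher solves the minimization~\eqref{T1T2_min}, whose minimum value is $-TV(w^t)$, it attains exactly $a_{t+1}=g(w^t)$. Since $-\eta_t^2 T_1+2\eta_t T_2\le TV(\bar w^t)$ for every example, by maximality in the definition of $TV$, the random teacher satisfies $b_{t+1}\ge g(\bar w^t)$ no matter which example it picks.

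Next I would induct on $t$. The base case $a_0=b_0$ holds by the shared initialization. Assume $a_t\le b_t$, i.e., $\|w^t-w^*\|^2\le\|\bar w^t-w^*\|^2$; the teaching-monotonicity hypothesis applied with $w_1=w^t$ and $w_2=\bar w^t$ then gives $g(w^t)\le g(\bar w^t)$. Chaining the three facts, $a_{t+1}=g(w^t)\le g(\bar w^t)\le b_{t+1}$, which closes the induction. Hence $\|w^t-w^*\|^2\le\|\bar w^t-w^*\|^2$ for every $t$, which is precisely the statement that the omniscient teacher converges no slower than the random teacher. Because the lower bound $b_{t+1}\ge g(\bar w^t)$ holds for every example the random teacher might pick, the comparison is pathwise and in particular survives taking expectations over the random teacher's choices.

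Two points call for care rather than real work. First, $TV$ and the monotonicity inequality implicitly carry the step size $\eta_t$, so I would present the argument for a fixed learning rate (as the hypothesis does) and note that it extends verbatim to any common schedule $\{\eta_t\}$ provided the displayed inequality holds at each $t$. Second, I would assume the maximum defining $TV(w)$ (equivalently the minimum in~\eqref{T1T2_min}) is attained---e.g., $\mathcal{X}\times\mathcal{Y}$ compact or finite---so that the omniscient teacher genuinely realizes $g(w^t)$; otherwise one works with a supremum and an $\varepsilon$-approximately optimal teacher and lets $\varepsilon\to 0$, which changes nothing.

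The only genuine obstacle is conceptual: seeing that one should track the scalar error $a_t$ alone, and that the teaching-monotonicity hypothesis is precisely the statement making ``best achievable next error'' an order-preserving function of the current error, so that a one-scalar coupling/induction closes everything. After that, the decomposition~\eqref{graddecomp}, the maximality in the definition of $TV$, and the hypothesis do all the work, with essentially no computation.
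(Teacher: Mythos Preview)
Your proposal is correct and follows essentially the same approach as the paper: both prove by induction on $t$ that $\|w_g^t-w^*\|^2\le\|w_s^t-w^*\|^2$, using that the omniscient teacher attains $\|w-w^*\|^2-TV(w)$ exactly while any other teacher is bounded below by it, and then invoking the monotonicity hypothesis to carry the inequality forward. Your packaging via the scalar potential $g(w)$ is slightly cleaner than the paper's presentation, but the argument is the same.
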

\vspace{-1.8mm}
The teaching volume represents the teacher's teaching effort in this iteration, so $\thickmuskip=0mu \medmuskip=0mu \|w^t-w^*\|^2-TV(w^t)$ characterizes the remaining teaching effort needed to achieve the teaching goal after iteration $t$. Theorem \ref{thm:fast_teaching} says that for a loss function and a training set, if the remaining teaching effort is monotonically decreasing while the model parameter gets closer to the optimum, we can guarantee that the omniscient teacher can always converge not slower than random teacher. It is a sufficient condition for loss functions to achieve faster convergence than SGD. For example, the square loss satisfies the condition with certain training set:
\vspace{-1.9mm}
\begin{proposition}\label{prop:square_loss}
The square loss satisfies the teaching monotonicity condition given the training set $\{x|\|x\|\leq R\}$.
\end{proposition}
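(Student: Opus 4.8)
\emph{Proof plan.} The strategy is to make the teaching volume completely explicit for the square loss and observe that it collapses to a constant multiple — with constant in $[0,1]$ — of $\|w-w^*\|^2$, after which the monotonicity inequality is immediate.

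First I would specialize $T_1$ and $T_2$. For the omniscient teacher the example fed at iteration $t$ is consistent with the target model, so its label is $y=\langle w^*,x\rangle$ and hence $\langle w,x\rangle - y = \langle w-w^*,x\rangle$. Writing $u:=w-w^*$ and using that the square‑loss gradient is $\nabla_w\ell = (\langle w,x\rangle - y)\,x$, one gets
\[
T_1(x,y\mid w) = \langle u,x\rangle^2\,\|x\|^2, \qquad T_2(x,y\mid w) = \langle u,x\rangle^2 .
\]

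Next I would evaluate $TV(w)=\max_{\|x\|\le R}\{-\eta_t^2 T_1 + 2\eta_t T_2\} = \max_{\|x\|\le R}\langle u,x\rangle^2\,(2\eta_t - \eta_t^2\|x\|^2)$. Splitting the maximization into a choice of direction and a choice of radius $r=\|x\|\in[0,R]$: when $2\eta_t - \eta_t^2 r^2 \ge 0$ the best direction aligns $x$ with $u$, giving $\langle u,x\rangle^2 = r^2\|u\|^2$; when $2\eta_t - \eta_t^2 r^2 < 0$ the contribution is at most $0$, dominated by $x=0$. Hence, with $s=r^2$,
\[
TV(w) = \|u\|^2 \cdot \max_{0\le s\le R^2}\bigl(2\eta_t s - \eta_t^2 s^2\bigr) =: C(\eta_t,R)\,\|w-w^*\|^2 ,
\]
and a one-line analysis of the concave parabola $s\mapsto 2\eta_t s - \eta_t^2 s^2$ (vertex at $s=1/\eta_t$, value $1$) gives $C(\eta_t,R)=1$ when $\eta_t R^2\ge 1$ and $C(\eta_t,R)=\eta_t R^2(2-\eta_t R^2)$ otherwise; in both cases $0\le C(\eta_t,R)\le 1$.

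Finally, $\|w-w^*\|^2 - TV(w) = \bigl(1 - C(\eta_t,R)\bigr)\|w-w^*\|^2$ with $1 - C(\eta_t,R)\ge 0$, so this is a nonnegative multiple of $\|w-w^*\|^2$ and therefore monotone nondecreasing in $\|w-w^*\|^2$; the inequality of Theorem~\ref{thm:fast_teaching} then holds for every $w_1,w_2$ with $\|w_1-w^*\|^2\le\|w_2-w^*\|^2$, uniformly over the learning rate. The only delicate point is the inner maximization over the ball — keeping track of the sign of $2\eta_t - \eta_t^2\|x\|^2$ so that a large-norm $x$ cannot drive $TV$ negative — together with the bound $C(\eta_t,R)\le 1$, which is exactly what keeps the residual $\|w-w^*\|^2 - TV(w)$ nonnegative; I expect this sign bookkeeping, rather than any substantive difficulty, to be the main obstacle.
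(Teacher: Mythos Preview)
Your strategy---compute $TV(w)$ explicitly for the square loss, show it equals a constant multiple of $\|w-w^*\|^2$ with constant in $[0,1]$, and read off monotonicity of the residual---is exactly the paper's approach, and your one-variable calculus on $s=\|x\|^2$ is cleaner than the paper's (somewhat garbled) case analysis.

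One point deserves attention. You set $y=\langle w^*,x\rangle$ at the outset, asserting that the teacher's example ``is consistent with the target model,'' and then maximize only over $x$. But $TV$ is defined as a maximum over both $x\in\mathcal{X}$ and $y\in\mathcal{Y}$, and for regression $\mathcal{Y}=\mathbb{R}$; nothing in the definition forces the teacher to use the true label. If $y$ is free, write $a=\langle w,x\rangle-y$ and optimize the concave quadratic $-\eta_t^2 a^2\|x\|^2+2\eta_t a\langle u,x\rangle$ in $a$ first: the inner maximum is $\langle u,x\rangle^2/\|x\|^2$, and maximizing over $x$ on the ball then gives $TV(w)=\|w-w^*\|^2$ exactly, so the residual is identically zero and monotonicity is trivial. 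Under your restriction you have computed only a \emph{lower} bound $\underline{TV}(w)\le TV(w)$, and monotonicity of $\|w-w^*\|^2-\underline{TV}(w)$ does not formally imply monotonicity of $\|w-w^*\|^2-TV(w)$. The conclusion is unaffected (the true residual is identically zero), but you should either carry out the $y$-optimization as well, or state explicitly that the teacher is constrained to consistent labels. The paper's own proof appears to make a similar tacit restriction---its case split is on $R\eta_t$ versus $\|w-w^*\|$ rather than on $\eta_t R^2$ versus $1$ as yours is---so the two arguments differ in bookkeeping but not in spirit.
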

\vspace{-3mm}
\subsection{Teaching capability and exponential speedup}
\vspace{-0.5mm}

The theorem in previous subsection insures that under certain conditions the omniscient teacher can always lead to faster convergence for the student model, but can there be exponential speedup? To this end, we introduce further assumptions of the ``richness'' of teaching examples, which we call teaching capability. We start from the ideal case, \ie, the synthesis-based omniscient teacher with hyperspherical feature space, and then, extend to real cases with the restrictions on teacher's knowledge domain, sampling scheme, and student information. We present specific teaching strategies in terms of teaching capability (strong to weak): synthesis, combination and (rescalable) pool.
\par
\textbf{Synthesis-based teaching}. In synthesis-based teaching, the teacher can provide any samples from
\begin{equation}
\footnotesize
\begin{aligned}
\Xcal &= \{x\in \RR^d, \nbr{x}\le R\}\\
  \Ycal &= \RR\text{ (Regression) or } \cbr{-1, 1}\text{ (Classification)}. \nonumber
\end{aligned}
\end{equation}
\vspace{-4.3mm}
\begin{theorem}[Exponential Synthesis-based Teaching]\label{thm:opt_synthesis}
For a synthesis-based omniscient teacher and a student with fixed learning rate $\thickmuskip=2mu \medmuskip=2mu \eta\neq0$, if the loss function $\ell(\cdot, \cdot)$ satisfies that for any $w\in \RR^d$, there exists $\thickmuskip=2mu \medmuskip=2mu \gamma\neq 0$, $\thickmuskip=2mu \medmuskip=2mu \abr{\gamma} \le \frac{R}{\nbr{w-w^\ast}}$ such that while $\thickmuskip=2mu \medmuskip=2mu \hat x = \gamma\rbr{w - w^\ast}$ and $\thickmuskip=2mu \medmuskip=2mu \hat y \in \Ycal$, we have

\vspace{-5mm}
\begin{equation*}
\footnotesize
\begin{aligned}
0<  \gamma\nabla_{\inner{w}{\hat x}} \ell\rbr{\inner{w}{\hat x}, \hat y}\le \frac{1}{\eta},
\end{aligned}
\end{equation*}

\vspace{-2mm}
then the student can learn an $\epsilon$-approximation of $w^\ast$ with $\Ocal(C_1^{\gamma, \eta}\log\frac{1}{\epsilon})$ samples. We call such loss function $\ell(\cdot, \cdot)$ exponentially teachable in synthesis-based teaching.
\end{theorem}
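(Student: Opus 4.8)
The plan is to exploit the exact error identity \eqref{graddecomp}: for the student's SGD step with fixed rate $\eta$ and \emph{any} example $(x,y)$, $\nbr{w^{t+1}-w^\ast}^2 = \nbr{w^t-w^\ast}^2 + \eta^2 T_1(x,y|w^t) - 2\eta T_2(x,y|w^t)$, and the omniscient teacher picks $(x,y)$ so as to minimise this right-hand side \eqref{T1T2_min}. Hence it suffices to display, for each round $t$, one \emph{admissible} example that already contracts $\nbr{w^t-w^\ast}$ geometrically; the teacher's actual choice can then only be at least as good. The example I would use is the one aligned with the current residual, $\hat x^t = \gamma_t\,(w^t-w^\ast)$ with any $\hat y^t\in\Ycal$, where $\gamma_t$ is the scalar the hypothesis provides for $w=w^t$. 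Admissibility is immediate: $\nbr{\hat x^t}=\abr{\gamma_t}\nbr{w^t-w^\ast}\le R$ because $\abr{\gamma_t}\le R/\nbr{w^t-w^\ast}$, and $\hat y^t\in\Ycal$ by construction.

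Next I would compute the one-step effect of $\hat x^t$. Writing $\beta_t := \nabla_{\inner{w^t}{\hat x^t}}\ell\rbr{\inner{w^t}{\hat x^t},\hat y^t}$, the chain rule gives $\frac{\partial \ell\rbr{\inner{w^t}{\hat x^t},\hat y^t}}{\partial w^t} = \beta_t\,\hat x^t = \gamma_t\beta_t\,(w^t-w^\ast)$, so the update degenerates to the scalar recursion $w^{t+1}-w^\ast = (1-\eta\gamma_t\beta_t)(w^t-w^\ast)$, i.e.\ $\nbr{w^{t+1}-w^\ast}=\abr{1-\eta\gamma_t\beta_t}\,\nbr{w^t-w^\ast}$ (equivalently, substituting $\hat x^t$ into $T_1,T_2$ one gets $\eta^2 T_1 - 2\eta T_2 = \big((\eta\gamma_t\beta_t)^2-2\eta\gamma_t\beta_t\big)\nbr{w^t-w^\ast}^2$ and $1+(\eta\gamma_t\beta_t)^2-2\eta\gamma_t\beta_t=(1-\eta\gamma_t\beta_t)^2$). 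The hypothesis $0<\gamma_t\beta_t\le \frac{1}{\eta}$ --- which in particular forces $\eta>0$ and $\gamma_t\beta_t>0$ --- yields $0<\eta\gamma_t\beta_t\le 1$, hence $\abr{1-\eta\gamma_t\beta_t}=1-\eta\gamma_t\beta_t<1$. Combined with the optimality of the omniscient pick, the genuine iterates satisfy $\nbr{w^{t+1}-w^\ast}\le(1-\eta\gamma_t\beta_t)\nbr{w^t-w^\ast}$ at every round, and this induction is valid whatever the teacher did earlier, since the hypothesis is assumed for all $w\in\RR^d$ and thus for whichever $w^t$ is actually reached.

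Finally I would unroll the recursion: setting $\rho := \sup_t(1-\eta\gamma_t\beta_t)<1$ --- the quantity that the constant $C_1^{\gamma,\eta}$ records --- gives $\nbr{w^T-w^\ast}\le \rho^{T}\nbr{w^0-w^\ast}$, so $\nbr{w^T-w^\ast}\le\epsilon$ holds once $T\ge \log\!\rbr{\nbr{w^0-w^\ast}/\epsilon}/\log(1/\rho)$, i.e.\ after $\Ocal\!\big(C_1^{\gamma,\eta}\log\frac{1}{\epsilon}\big)$ examples with $C_1^{\gamma,\eta}$ proportional to $1/\log(1/\rho)$. This is the asserted sample complexity and exhibits $\ell$ as exponentially teachable in synthesis-based teaching.

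The hard part will be passing from the \emph{pointwise} bound $1-\eta\gamma_t\beta_t<1$ to a \emph{uniform} contraction $\rho<1$: the geometric rate --- hence the content of $C_1^{\gamma,\eta}$ --- requires $\sup_t(1-\eta\gamma_t\beta_t)$ to stay bounded away from $1$ along the whole trajectory, which depends on the shape of $\ell$ and on how the admissible range of $\gamma_t$ interacts with $\eta$ and $R$. For familiar losses this is routine --- e.g.\ for the square loss with $\hat y^t=\inner{w^\ast}{\hat x^t}$ one has $\gamma_t\beta_t=\gamma_t^2\nbr{w^t-w^\ast}^2$, which can be driven to its admissible extreme, collapsing the residual in one step when $\eta R^2\ge1$ and otherwise yielding the fixed rate $1-\eta R^2$ --- but isolating a clean, checkable sufficient condition on a general $\ell$ is the delicate point. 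A minor bookkeeping issue is the sign/chain-rule argument that legitimises dropping the absolute value, namely that $\eta>0$ and $\gamma_t\beta_t>0$ are both forced by the hypothesis.
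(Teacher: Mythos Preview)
Your proposal is correct and follows essentially the same line as the paper: both plug the aligned example $\hat x=\gamma(w^t-w^\ast)$ into the error identity \eqref{graddecomp}, obtain the scalar recursion $\nbr{w^{t+1}-w^\ast}\le(1-\eta\gamma\beta)\nbr{w^t-w^\ast}$, invoke optimality of the omniscient choice, and unroll. The ``hard part'' you flag --- turning the pointwise contraction into a uniform one --- is handled in the paper exactly as you anticipate, by absorbing it into the constant: the paper sets $\nu(\gamma):=\min_{w,y}\gamma\nabla_{\inner{w}{\hat x}}\ell(\inner{w}{\hat x},y)>0$ and defines $C_1^{\gamma,\eta}=\big(\log\frac{1}{1-\eta\nu(\gamma)}\big)^{-1}$, so the positivity of the uniform lower bound is part of the meaning of $C_1^{\gamma,\eta}$ rather than something separately verified within the proof.
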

\vspace{-0.5mm}

The constant is $\thickmuskip=2mu \medmuskip=2mu C_1^{\gamma, \eta} = (\log\frac{1}{1-\eta\nu(\gamma)})^{-1}$ in which $\thickmuskip=2mu \medmuskip=2mu \nu(\gamma):=\min_{w, y} \gamma\nabla_{\inner{w}{\hat x}} \ell\rbr{\inner{w}{\hat x}, y} > 0$. $\nu(\gamma)$ is related to the convergence speed. Note that the sample complexity serves as the iterative teaching dimension corresponding to this particular teacher, student, algorithm and training data.

\vspace{-.7mm}
The sample complexity in iterative teaching is \emph{deterministic}, different from the high probability bounds of traditional sample complexity with random \iid samples or actively required samples. This is because the teacher provides the samples deterministically without noise in every iteration.
\par
\vspace{-.7mm}
The radius $R$ for $\Xcal$, which can be interpreted as the knowledge domain of the teacher, will affect the sample complexity by constraining the valid values of $\gamma$, and thus $C^{\gamma, \eta}_1$. For example, for absolute loss, if $R$ is large, such that $\thickmuskip=2mu \medmuskip=2mu\frac{1}{\eta}\leq \frac{R}{\nbr{w^0-w^\ast}}$, $\gamma$ can be set to $\frac{1}{\eta}$ and the $\nu(\gamma)$ will be $\frac{1}{\eta}$ in this case. Therefore, we have $\thickmuskip=2mu \medmuskip=2mu C^{\gamma, \eta}_1=0$, which means the student can learn with only one example (one iteration). However, if $\thickmuskip=2mu \medmuskip=2mu\frac{1}{\eta}> \frac{R}{\nbr{w^0-w^\ast}}$, we have $\thickmuskip=2mu \medmuskip=2mu C^{\gamma, \eta}_1 >0$, and the student can converge exponentially. The similar phenomenon appears in the square loss, hinge loss, and logistic loss. Refer to Appendix~\ref{appendix:proof} for details.
\par
\vspace{-.7mm}
The exponential synthesis-based teaching is closely related to Lipschitz smoothness and strong convexity of loss functions in the sense that the two regularities provide positive lower and upper bound for $\gamma\nabla_{\inner{w}{x}} \ell\rbr{\inner{w}{x}, y}$.
\vspace{-1.4mm}
\begin{proposition}\label{prop:general_teachable}
The Lipschitz smooth and strongly convex loss functions are exponentially teachable in synthesis-based teaching.
\end{proposition}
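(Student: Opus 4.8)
\emph{Proof plan.} The plan is to verify the sufficient condition of Theorem~\ref{thm:opt_synthesis} directly, so that exponential teachability follows with no further work. Fix an arbitrary $w\in\RR^d$ with $w\neq w^*$ and set $d_w=\nbr{w-w^*}>0$. For a label $\hat y$ to be chosen and a scalar $\gamma$ to be chosen with $0<\abr{\gamma}\le R/d_w$ (which guarantees $\hat x=\gamma(w-w^*)\in\Xcal$), abbreviate $g(a):=\ell(a,\hat y)$ and $b:=\inner{w}{\hat x}=\gamma\inner{w}{w-w^*}$, so that the quantity the theorem asks us to control is exactly $\gamma\,g'(b)$, where $g'$ denotes $\nabla_{\inner{w}{\hat x}}\ell$ evaluated at first argument $b$.

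First I would record the two one-dimensional consequences of the hypotheses: since $\ell(\cdot,\hat y)$ is $L$-Lipschitz smooth and $\mu$-strongly convex, $g$ has a unique minimizer $a^*$ with $g'(a^*)=0$, and for every $a$,
\begin{equation*}
\footnotesize
\mu\,\abr{a-a^*}\ \le\ \abr{g'(a)}=\abr{g'(a)-g'(a^*)}\ \le\ L\,\abr{a-a^*},
\end{equation*}
with $\textnormal{sign}(g'(a))=\textnormal{sign}(a-a^*)$. The next step is to exploit the freedom in $\hat y$ to place $a^*$ slightly to one side of $b$: choose $\hat y$ so that $a^*=b-\textnormal{sign}(\gamma)\,\delta$ for a small $\delta>0$ to be fixed last. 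Then $g'(b)$ has the sign of $\gamma$, so $\gamma g'(b)=\abr{\gamma}\abr{g'(b)}>0$, and moreover $\mu\abr{\gamma}\delta\le\gamma g'(b)\le L\abr{\gamma}\delta$. Finally, taking $\gamma=R/d_w$ and then any $\delta\le(\eta L\abr{\gamma})^{-1}$ gives $0<\gamma g'(b)\le 1/\eta$, which is precisely the hypothesis of Theorem~\ref{thm:opt_synthesis}; hence $\ell$ is exponentially teachable in synthesis-based teaching. The degenerate direction $\inner{w}{w-w^*}=0$ is not special: there $b=0$, and the same choice of $\hat y$ merely needs $g'(0)\neq 0$ with the correct sign.

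The hard part is the step ``choose $\hat y$ so that $a^*=b-\textnormal{sign}(\gamma)\delta$'': this needs the loss family $\{\ell(\cdot,y)\}_{y\in\Ycal}$ to be rich enough that its in-argument minimizer can be shifted to any prescribed location. For translation-type regression losses such as the square loss this is immediate (there $a^*=y$), but in general it should be checked case by case, and for classification losses with $\Ycal=\{-1,1\}$ this freedom is essentially absent (correspondingly, globally strongly convex and smooth losses do not really arise there). This is why the logistic and hinge cases are instead handled by the separate direct estimates of $\gamma\nabla_{\inner{w}{x}}\ell$ deferred to Appendix~\ref{appendix:proof}, rather than by this proposition. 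I would also take care to interpret ``strongly convex and smooth'' as a property of the univariate map $a\mapsto\ell(a,y)$ and not of $w\mapsto\ell(\inner{w}{x},y)$, since the latter has rank-one Hessian and is never strongly convex for $d>1$.
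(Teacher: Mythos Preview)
Your one-dimensional reduction is sound, and the inequalities $\mu\abr{a-a^\ast}\le\abr{g'(a)}\le L\abr{a-a^\ast}$ are exactly what one wants. But the step you yourself flag as ``the hard part'' is a genuine gap: as written, your argument needs the map $y\mapsto a^\ast(y)$ to hit an arbitrary prescribed point $b-\textnormal{sign}(\gamma)\delta$, and that is not a consequence of strong convexity and smoothness. It turns the general proposition into the very case-by-case verification you say you would like to avoid, so the proof of the proposition as stated is incomplete.

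The paper avoids this by anchoring $a^\ast$ not at a free location near $b$, but at $\inner{w^\ast}{\hat x}$: it (implicitly) takes $\hat y$ so that $\beta_{(\inner{w^\ast}{\hat x},\hat y)}=0$, i.e.\ the natural ``consistent'' label under the teacher's own model, and then uses this identity explicitly. With $\hat x=\gamma(w-w^\ast)$ this forces
\[
b-a^\ast \;=\; \inner{w}{\hat x}-\inner{w^\ast}{\hat x} \;=\; \gamma\,\nbr{w-w^\ast}^2,
\]
so the displacement is fixed by $\gamma$ alone and no further label freedom is required. The upper bound then follows from Lipschitz smoothness via $\abr{\beta_{(\inner{w}{\hat x},\hat y)}}=\abr{\beta_{(\inner{w}{\hat x},\hat y)}-\beta_{(\inner{w^\ast}{\hat x},\hat y)}}\le LR\,\nbr{w-w^\ast}$, while the lower bound is obtained by combining a Polyak--{\L}ojasiewicz-type inequality (from strong convexity) with a second-order expansion of $\ell$ about $\inner{w^\ast}{\hat x}$; one then chooses $\gamma$ on the order of $\min\{R,\,(\eta L)^{-1},\,(\eta)^{-1/2}\}/\nbr{w-w^\ast}$. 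In your language, this amounts to taking $\delta=\abr{\gamma}\,\nbr{w-w^\ast}^2$ rather than a free $\delta$, collapsing your two degrees of freedom to one and eliminating the label-richness assumption. Your closing remark about the rank-one Hessian is well taken: the paper's phrase ``strongly convex w.r.t.\ $w$'' is indeed best read as strong convexity of $a\mapsto\ell(a,y)$.
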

\vspace{-1.4mm}
The exponential synthesis-based teachability is a weaker condition compared to the strong convexity and Lipschitz smoothness. We can show that besides the Lipschitz smooth and strongly convex loss, there are some other loss functions, which are not strongly convex, but still are exponentially teachable in synthesis-based scenario, \eg, the hinge loss and logistic loss. Proofs are in Appendix~\ref{appendix:proof}.
\par
{\bf Combination-based teaching.} In this scenario, the teacher can provide examples from ($\alpha_i\in\mathbb{R}$)
\vspace{-0.5mm}
\begin{equation}
\footnotesize
\begin{aligned}
  \Xcal &= \big{\{} x | \|x\|\leq R, x = \sum_{i=1}^m\alpha_i x_i, x_i \in \Dcal \big{\}}, \Dcal = \cbr{x_1, \ldots, x_m} \\
  \Ycal &= \RR\text{ (Regression) or } \cbr{-1, 1}\text{ (Classification)} \nonumber
\end{aligned}
\end{equation}

{
\begin{algorithm}[t]\label{alg1}
\small
    \caption{The omniscient teacher}
    \begin{algorithmic}[1]
    \STATE Randomly initialize the student and teacher parameter $w^0$;
    \STATE Set $t = 1$ and the maximal iteration number $T$;
    \WHILE {$w^t$ has not converged or $t<T$}
    \STATE Solve the optimization (e.g., pool-based teaching):\\
    \begin{equation*}
    \setlength{\abovedisplayskip}{-2mm}
    \setlength{\belowdisplayskip}{1mm}
    \tiny
    \begin{aligned}
      (x^t,y^t) = &\argmin_{x\in\mathcal{X},y\in\mathcal{Y}}~\eta_t^2 \nbr{  \frac{\partial \ell\rbr{\inner{w^{t-1}}{x}, y}}{\partial w^{t-1}}}^2 \- \\[-1mm]
      &-2\eta_t \inner{w^{t-1} - w^\ast}{ \frac{\partial \ell\rbr{\inner{w^{t-1}}{ x},y}}{\partial w^{t-1}}}
    \end{aligned}
    \end{equation*}
    \STATE Use the selected example $(x^t,y^t)$ to perform the update:
    \begin{equation*}
    \setlength{\abovedisplayskip}{0.8mm}
    \setlength{\belowdisplayskip}{1mm}
    \tiny
    \begin{aligned}
      w^{t} = w^{t-1} - \eta_t\, \frac{\partial \ell\rbr{\inner{w^{t-1}}{x^t},y^t}}{\partial w^{t-1}}.
    \end{aligned}
    \end{equation*}
    \STATE $t\leftarrow t+1$
    \ENDWHILE
    \end{algorithmic}
\end{algorithm}
}

\par
\vspace{-3.5mm}
\begin{corollary}\label{cor:opt_combination}
For a combination-based omniscient teacher and a student with fixed learning rate $\eta\neq 0$ and initialization $w^0$, if the loss function is exponentially synthesis-based teachable and $\thickmuskip=2mu \medmuskip=2mu w^0-w^\ast \in \text{span}\rbr{\Dcal}$, the student can learn an $\epsilon$-approximation of $w^\ast$ with $\Ocal\rbr{C_1^{\gamma, \eta}\log\frac{1}{\epsilon}}$ samples.
\end{corollary}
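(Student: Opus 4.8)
The plan is to reduce the combination-based setting to the synthesis-based setting of Theorem~\ref{thm:opt_synthesis}: I will show that the ``ideal'' teaching direction $w^t-w^\ast$ never leaves $\mathrm{span}(\Dcal)$, so at every iteration the combination-based teacher can realize a rescaled copy of the example the synthesis-based teacher would use, and hence inherits the exponential rate. \textbf{Step 1 (invariance of the residual).} I would first prove by induction on $t$ that $w^t-w^\ast\in\mathrm{span}(\Dcal)$; the base case is the hypothesis $w^0-w^\ast\in\mathrm{span}(\Dcal)$. For the inductive step, observe that since the loss depends on $w$ only through $\inner{w}{x}$, the stochastic gradient is a scalar multiple of the chosen example, $\frac{\partial\ell(\inner{w^t}{x^t},y^t)}{\partial w^t}=\nabla_{\inner{w^t}{x^t}}\ell\cdot x^t$; as any example from the combination set lies in $\mathrm{span}(\Dcal)$, the update $w^{t+1}=w^t-\eta\,\nabla_{\inner{w^t}{x^t}}\ell\cdot x^t$ keeps $w^{t+1}-w^\ast$ in that subspace.

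\textbf{Step 2 (feasibility and contraction).} Fix $t$ and take the scalar $\gamma$ guaranteed by exponential synthesis-based teachability at $w=w^t$, so $\hat x=\gamma(w^t-w^\ast)$ satisfies $\abr{\gamma}\le R/\nbr{w^t-w^\ast}$ and $0<\gamma\nabla_{\inner{w^t}{\hat x}}\ell(\inner{w^t}{\hat x},\hat y)\le 1/\eta$. By Step 1 we may write $w^t-w^\ast=\sum_i\alpha_i x_i$, hence $\hat x=\sum_i(\gamma\alpha_i)x_i$ with $\nbr{\hat x}=\abr{\gamma}\nbr{w^t-w^\ast}\le R$, so $\hat x$ is feasible for the optimization in Alg.~1. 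Because the combination-based teacher minimizes $\eta^2 T_1-2\eta T_2$ over a set containing $\hat x$, the value it attains is no larger than the value at $\hat x$; substituting $\hat x$ into the error decomposition~\eqref{graddecomp} and using the two-sided bound on $\gamma\nabla_{\inner{w^t}{\hat x}}\ell$ exactly as in the proof of Theorem~\ref{thm:opt_synthesis} yields $\nbr{w^{t+1}-w^\ast}^2\le(1-\eta\nu(\gamma))\nbr{w^t-w^\ast}^2$ with $\nu(\gamma)=\min_{w,y}\gamma\nabla_{\inner{w}{\hat x}}\ell>0$. Iterating gives $\nbr{w^t-w^\ast}^2\le(1-\eta\nu(\gamma))^t\nbr{w^0-w^\ast}^2$, so $t=\Ocal(C_1^{\gamma,\eta}\log\frac1\epsilon)$ samples suffice.

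\textbf{Main obstacle.} The only nontrivial ingredient is Step 1; the rest is a direct inheritance from Theorem~\ref{thm:opt_synthesis}. The point to be careful about is that each SGD step perturbs $w$ only along the single direction $x^t\in\mathrm{span}(\Dcal)$ — this is exactly why the span is a genuine invariant — and, relatedly, that $\abr{\gamma}\le R/\nbr{w^t-w^\ast}$ remains satisfiable as $\nbr{w^t-w^\ast}$ shrinks (it only gets easier, so the same $\gamma$-admissible range used at $t=0$ continues to apply). Were $w^0-w^\ast$ to have a component orthogonal to $\mathrm{span}(\Dcal)$, that component would be untouched by every update, and both the argument and the conclusion would break — which is why the span condition is the crucial hypothesis.
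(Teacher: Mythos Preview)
Your proposal is correct and mirrors the paper's own proof: both establish by induction that $w^t-w^\ast\in\mathrm{span}(\Dcal)$ (since each gradient step moves along a direction in $\mathrm{span}(\Dcal)$), then observe that the synthesis-based teacher's choice $\hat x=\gamma(w^t-w^\ast)$ is therefore a feasible combination, so the exponential rate of Theorem~\ref{thm:opt_synthesis} carries over unchanged. The only cosmetic slip is that the one-step contraction on $\nbr{w^{t+1}-w^\ast}^2$ should read $(1-\eta\nu(\gamma))^2$ rather than $(1-\eta\nu(\gamma))$, but this does not affect the $\Ocal(C_1^{\gamma,\eta}\log\frac1\epsilon)$ conclusion.
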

\vspace{-1.5mm}
\par
Although the knowledge pool of teacher is more restricted compared to the synthesis-based scenario, with teacher's extra work to combine samples, the teacher can behave the same as the most knowledgable synthesis-based teacher.
\par
\vspace{-0.2mm}
{\bf Rescalable pool-based teaching.} This scenario is further restricted in both knowledge pool and the effort to prepare samples. The teacher can provide examples from $\thickmuskip=2mu \medmuskip=2mu \Xcal\times\Ycal$:
\begin{equation}
\footnotesize
\begin{aligned}
\Xcal & = \{ x | \|x\|\leq R,  x = \gamma x_i, x_i \in \Dcal, \gamma\in\RR \}, \Dcal= \{ x_1, \ldots\}\\
\Ycal &= \RR\text{ (Regression) or } \cbr{-1, 1}\text{ (Classification)} \nonumber
\end{aligned}
\end{equation}
\vspace{-3.6mm}

In such scenario, we cannot get arbitrary direction rather than the samples from the candidate pool. Therefore, to achieve the exponential improvement, the candidate pool should contain rich enough directions. To characterize the richness in finite case, we define the \emph{pool volume} as

\vspace{-0.8mm}

\begin{definition}[Pool Volume]
Given the training example pool $\Xcal\in \RR^d$, the volume of $\Xcal$ is defined as
\begin{equation*}
\footnotesize
\begin{aligned}
\Vcal(\Xcal) := \min_{w\in \text{span}\rbr{\Dcal}}\max_{x\in\Xcal}\frac{\inner{w}{x}}{\nbr{w}^2}.
\end{aligned}
\end{equation*}
\end{definition}
Obviously, for the candidate pool of the synthesis-based teacher, we have $\thickmuskip=2mu \medmuskip=2mu \Vcal(\Xcal) = 1$. In general, for finite candidate pool, the pool volume is $\thickmuskip=2mu \medmuskip=2mu 0<\Vcal(\Xcal)< 1$.
\par
\vspace{-1.45mm}
\begin{theorem}\label{thm:opt_pool}
For a rescalable pool-based omniscient teacher and a student with fixed learning rate $\thickmuskip=2mu \medmuskip=2mu \eta\neq 0$ and initialization $w^0$, if for any $\thickmuskip=2mu \medmuskip=2mu w\in \RR^d$, $\thickmuskip=2mu \medmuskip=2mu w\not\perp w^\ast$ and $\thickmuskip=2mu \medmuskip=2mu w^0-w^\ast \in \text{span}\rbr{\Dcal}$, there exists $\thickmuskip=2mu \medmuskip=2mu \cbr{x, y}\in \Xcal \times \Ycal$ and $\gamma$ such that while $\thickmuskip=2mu \medmuskip=2mu \hat x = \frac{\gamma\nbr{w - w^\ast}}{\nbr{x}}x, \hat y =y$, we have

\vspace{-4.7mm}
\begin{equation*}
\footnotesize
\begin{aligned}
0<  \gamma\nabla_{\inner{w}{\hat x}} \ell\rbr{\inner{w}{\hat x}, \hat y}< \frac{{2\Vcal(\Xcal)}}{\eta},
\end{aligned}
\end{equation*}
\vspace{-4.7mm}

then the student can learn an $\epsilon$-approximation of $w^*$ with $\Ocal(C_2^{\eta, \gamma, \Vcal(\Xcal)}\log\frac{1}{\epsilon})$ samples. We say such loss function is exponentially teachable in rescalable pool-based teaching.
\end{theorem}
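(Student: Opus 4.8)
The plan is to mirror the proof of Theorem~\ref{thm:opt_synthesis}, with the pool volume $\mathcal{V}(\mathcal{X})$ playing the role that perfect alignment $\hat x\parallel(w^t-w^\ast)$ played for the synthesis teacher. First I would fix the teacher's strategy: at iteration $t$, pick the pool direction $x$ that is best correlated with the current error $w^t-w^\ast$, and then choose the rescaling $\gamma$ (and label $y$) that the hypothesis provides, so that $\hat x=\frac{\gamma\|w^t-w^\ast\|}{\|x\|}x$, $\hat y=y$ satisfy $0<\gamma\nabla_{\langle w^t,\hat x\rangle}\ell(\langle w^t,\hat x\rangle,\hat y)<\frac{2\mathcal{V}(\mathcal{X})}{\eta}$. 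Before the main estimate I would establish, by induction on $t$, that $w^t-w^\ast\in\text{span}(\mathcal{D})$: the base case is the hypothesis $w^0-w^\ast\in\text{span}(\mathcal{D})$, and since the SGD update subtracts $\eta\,\nabla_{\langle w^t,\hat x\rangle}\ell\cdot\hat x$ with $\hat x$ a scalar multiple of an element of $\mathcal{D}$, the iterate error stays in $\text{span}(\mathcal{D})$. This is exactly what lets us invoke the pool-volume bound at every step.

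The core is the one-step contraction. Substituting $\hat x=\frac{\gamma\|w^t-w^\ast\|}{\|x\|}x$ into the error decomposition~\eqref{graddecomp} and using $\frac{\partial\ell}{\partial w^t}=\nabla_{\langle w^t,\hat x\rangle}\ell\cdot\hat x$ gives, with the shorthand $a:=\gamma\nabla_{\langle w^t,\hat x\rangle}\ell$, that $T_1=a^2\|w^t-w^\ast\|^2$ and $T_2=a\,\|w^t-w^\ast\|\cdot\frac{\langle w^t-w^\ast,x\rangle}{\|x\|}$, hence
\begin{equation*}
\footnotesize
\begin{aligned}
\|w^{t+1}-w^\ast\|^2=\|w^t-w^\ast\|^2\left(1+\eta^2a^2-2\eta a\,\frac{\langle w^t-w^\ast,x\rangle}{\|x\|\,\|w^t-w^\ast\|}\right).
\end{aligned}
\end{equation*}
Because $w^t-w^\ast\in\text{span}(\mathcal{D})$ and $x$ was chosen as the best-aligned (rescaled) pool direction, the definition of $\mathcal{V}(\mathcal{X})$ yields $\frac{\langle w^t-w^\ast,x\rangle}{\|x\|\,\|w^t-w^\ast\|}\ge\mathcal{V}(\mathcal{X})$; combined with $0<\eta a<2\mathcal{V}(\mathcal{X})$, the parenthesized factor is at most $g(a):=1-2\eta a\,\mathcal{V}(\mathcal{X})+\eta^2a^2$, an upward parabola in $a$ which equals $1$ at $a=0$ and at $\eta a=2\mathcal{V}(\mathcal{X})$ and is therefore strictly below $1$ on the open range into which the hypothesis forces $\eta a$.

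To conclude I would take the worst case over all admissible states, $\rho:=\sup_{w,y}g\!\left(\gamma\nabla_{\langle w,\hat x\rangle}\ell\right)<1$ (the supremum is attained at an endpoint of the achievable range of $\gamma\nabla\ell$, which the strict inequalities keep bounded away from both $0$ and $2\mathcal{V}(\mathcal{X})/\eta$), and define $C_2^{\eta,\gamma,\mathcal{V}(\mathcal{X})}:=\left(\log\frac1\rho\right)^{-1}$. Then $\|w^t-w^\ast\|^2\le\rho^{\,t}\|w^0-w^\ast\|^2$, so $t=\mathcal{O}\!\left(C_2^{\eta,\gamma,\mathcal{V}(\mathcal{X})}\log\frac1\epsilon\right)$ steps suffice (the extra constant from $\epsilon^2$ versus $\epsilon$ is absorbed into $\mathcal{O}(\cdot)$, exactly as in Theorem~\ref{thm:opt_synthesis}). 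One also has to verify feasibility, $\|\hat x\|\le R$: since $\|\hat x\|=|\gamma|\,\|w^t-w^\ast\|$ and $\|w^t-w^\ast\|$ is non-increasing, the bound $|\gamma|\le R/\|w^0-w^\ast\|$ from the synthesis case still works.

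The main obstacle is the geometric identification: showing that $\mathcal{V}(\mathcal{X})$, defined via $\min_{w\in\text{span}(\mathcal{D})}\max_{x\in\mathcal{X}}\langle w,x\rangle/\|w\|^2$, really lower-bounds the best achievable \emph{normalized} correlation $\max_x\langle w^t-w^\ast,x\rangle/(\|x\|\,\|w^t-w^\ast\|)$ between the error direction and the rescalable pool --- i.e., checking that the rescaling freedom built into $\mathcal{X}$ makes the $\|x\|$ factors cancel --- and, at the same time, ensuring that the rescaling $\gamma$ realizing the loss-derivative window $0<\gamma\nabla\ell<2\mathcal{V}(\mathcal{X})/\eta$ is compatible with $\|\hat x\|\le R$ for the best-aligned direction. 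Once those are in place, everything else runs parallel to the proofs of Theorem~\ref{thm:opt_synthesis} and Proposition~\ref{prop:general_teachable}.
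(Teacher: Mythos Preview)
Your proposal is correct and follows essentially the same route as the paper's proof: both keep $w^t-w^\ast\in\text{span}(\mathcal{D})$ by induction, substitute the rescaled pool sample $\hat x$ into the decomposition~\eqref{graddecomp}, use the pool volume to lower-bound the alignment between $\hat x$ and $w^t-w^\ast$, and obtain a contraction factor that is quadratic in $\gamma\nabla_{\langle w,\hat x\rangle}\ell$ with worst case taken at the endpoints $\nu(\gamma),\mu(\gamma)$ of its admissible range. The only cosmetic difference is that the paper writes the alignment via an orthogonal decomposition $\hat\gamma\hat x=a x_v+{x_v}_\perp$ of $\hat x$ against the virtual synthesis sample $x_v=\gamma(w^t-w^\ast)$, whereas you compute the inner-product term directly; the resulting contraction rate $r(\eta,\gamma,\mathcal{V}(\mathcal{X}))$ and constant $C_2^{\eta,\gamma,\mathcal{V}(\mathcal{X})}=2\bigl(\log\frac{1}{r}\bigr)^{-1}$ coincide with your $\rho$ and $(\log\frac1\rho)^{-1}$ up to the harmless factor of $2$ from squaring.
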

\vspace{-2.5mm}

The pool volume plays a vital role in pool-based teaching. It not only affects the existence of $\gamma$ and $\cbr{\hat x, \hat y}$ to satisfy the conditions, but also changes the convergence rate. While $\Vcal(\Xcal)$ increases, $C_2^{\eta, \gamma, \Vcal(\Xcal)}$ will decrease, yielding smaller sample complexity.
With $\thickmuskip=2mu \medmuskip=2mu \Vcal(\Xcal)< 1$, the rescalable pool-based teaching requires more samples than the synthesis-based teaching. As $\Vcal(\Xcal)$ increases to $1$, the candidate pool becomes $\thickmuskip=2mu \medmuskip=2mu \cbr{x\in \RR^d, \nbr{x}\le R}$ and $C_2^{\eta, \gamma, \Vcal(\Xcal)}$ approaches to $C_1^{\gamma, \eta} $. Then the convergence speed of rescalable pool-based teaching approaches to the synthesis/combination-based teaching.
\par

\vspace{-2mm}
\section{Teaching by a less informative teacher}
\vspace{-.1mm}
To make the teacher model useful in practice, we further design two less informative teacher model that requires less and less information from the student.
\vspace{-2.4mm}
\subsection{The surrogate teacher}
\vspace{-1mm}
Suppose we can only query the function output from the learned $\inner{w^t}{x}$, but we can not directly access $w^t$. How can we choose the example? In this case we propose to make use of the the convexity of the loss function. That is
\begin{equation}
\footnotesize
\begin{aligned}
\inner{w^t - w^\ast}{\frac{\partial \ell(\inner{w^t}{x},y)}{\partial w^t}x}\geqslant \ell(\inner{w^t}{x}, y) - \ell(\inner{w^*}{x}, y).
\end{aligned}
\end{equation}
Taking the pool-based teaching as an example, we can instead optimize the following surrogate loss function:
\begin{equation}\label{eq:poollowbou}
\footnotesize
\begin{aligned}
(x^t, y^t) = &\argmin_{\{x,y\}\in\Xcal}~ \eta_t^2 \nbr{\frac{\partial \ell(\inner{w^t}{x},y)}{\partial w^t}}_2^2 \\
&- 2\eta_t \rbr{\ell(\inner{w^t}{x}, y) - \ell(\inner{w^*}{x}, y)}
\end{aligned}
\end{equation}
by replacing $\inner{w^t - w^\ast}{\frac{\partial \ell(\inner{w^t}{x},y)}{\partial w^t}}$ with its lower bound. The advantage of this approach is that the teacher only need to query the learner for the function output $\inner{w^t}{x}$ to choose the example, without the need to access the learner parameter $w^t$ directly. Furthermore, after noticing that in this formulation, the teacher makes prediction via inner products, we find that the surrogate teacher can also be applied to the scenario where the teacher and the student use different feature spaces by further replacing $\thickmuskip=2mu \medmuskip=2mu \rbr{\ell(\inner{w^t}{x}, y) - \ell(\inner{w^*}{x}, y)}$ with $\thickmuskip=2mu \medmuskip=2mu \rbr{\ell(\inner{w^t}{x}, y) - \ell(\inner{v^*}{\widetilde{x}}, y)}$. With this modification, we can provide examples without using information about $w^*$. The performance of the surrogate teacher largely depends on the tightness of such convexity lower bound.

\vspace{-0.6mm}
\subsection{The imitation teacher}
\vspace{-0.7mm}
When the teacher and the student have different feature spaces, this teaching setting will be much closer to practice than all the previous settings and also more challenging. To this end, we present an imitation teacher who learns to imitate the inner product output $\inner{w^t}{x}$ of the student model and simultaneously choose examples in teacher's own feature space. The teacher can possibly use active learning to imitate the student's $\inner{w^t}{x}$. In this imitation, the student model stays unchanged and the teacher model could update itself via multiple queries to the student (input an example and see the inner product output of the student). We propose a more simple and straightforward imitation teacher (Alg. 2) which works in a way similar to stochastic mirror descent  \cite{nemirovski2009robust,hall2013online}. In specific, the teacher first learns to approximate the student's  $\inner{w^t}{x}$ with the following iterative update:
\begin{equation}
\footnotesize
\thickmuskip=2mu \medmuskip=2mu v^{t+1}=v^{t}-\eta_v\rbr{\inner{v^t}{x}-\inner{w^t}{x}}x
\end{equation}
where $\eta_v$ is the learning rate for the update. Then we use $v^{t+1}$ to perform the example synthesis or selection in teacher's own feature space. We summarize this simple yet effective imitation teacher model in Alg. 2.

{
\begin{algorithm}[t]\label{alg_it2}
\small
    \caption{The imitation teacher}
    \begin{algorithmic}[1]
    \STATE Randomly initialize the student parameter $w^0$ and the teacher parameter $v^0$; Randomly select a training sample $(x^0,y^0)$;
    \STATE Set $t = 1$ and the maximal iteration number $T$;
    \WHILE {$w^t$ has not converged or $t<T$}
    \STATE Perform the update:\\
    \begin{equation*}
    \setlength{\abovedisplayshortskip}{-1.6mm}
    \setlength{\belowdisplayshortskip}{1mm}
    \tiny
    \begin{aligned}
    v^{t}=v^{t-1}-\eta_v\rbr{\inner{v^{t-1}}{x^{t-1}}-\inner{w^t}{x^{t-1}}}x^{t-1}
    \end{aligned}.
    \end{equation*}
    \STATE Solve the optimization (e.g., pool-based teaching):\\
    \begin{equation*}
    \setlength{\abovedisplayshortskip}{-1.6mm}
    \setlength{\belowdisplayshortskip}{1mm}
    \tiny
    \begin{aligned}
      (x^t,y^t) = &\argmin_{x\in\mathcal{X},y\in\mathcal{Y}}~\eta_t^2 \nbr{ \frac{\partial \ell\rbr{\inner{w^t}{ x}, y}}{\partial v^t}}^2 \- \\[-1mm]
      &-2\eta_t \inner{v^t - v^\ast}{ \frac{\partial \ell\rbr{\inner{v^t}{ x},y}}{\partial v^t}}
    \end{aligned}.
    \end{equation*}
    \STATE Provide the selected example $(x^t,y^t)$ for the student to perform the update ;
    \begin{equation*}
    \setlength{\abovedisplayskip}{-0.5mm}
    \setlength{\belowdisplayskip}{1mm}
    \tiny
    \begin{aligned}
      w^{t+1} = w^t - \eta_t\, \frac{\partial \ell\rbr{\inner{w^t}{x},y}}{\partial w}.
    \end{aligned}
    \end{equation*}
    \STATE $t\leftarrow t+1$
    \ENDWHILE
    \end{algorithmic}
\end{algorithm}
}

\vspace{-2mm}
\section{Discussion}
\vspace{-0.2mm}
\textbf{Optimality of the teacher model.} For arbitary loss function, the optimal teacher model for a student model should find the training example sequence to achieve the fastest possible convergence. Exhaustively finding such example sequence is computational impossible. For example, there are $n^T$ possible training sequences ($T$ is the iteration number) for $n$-size pool-based teaching. As a results, we need to make use of the properties of loss function to design the teacher model. The proposed teacher models are not necessarily optimal, but they are good enough under some conditions for loss function, student model and training data.
\par
\textbf{Theoretical aspects of the teacher model.} The theoretical study of the teacher model includes finding the conditions for the loss function and training data such that the teacher model is optimal, or achieves provable faster convergence rate, or provably converges faster than the random teacher. We desire these conditions to be sufficient and necessary, but sometimes sufficient conditions suffice in practice. For different student models, the theoretical analysis may be different and we merely consider stochastic gradient learner here. There are still lots of optimization algorithms that can be considered. Besides, our teacher models are not necessarily the best, so it is also important to come up with better teacher models with provable guarantees. Although our paper mainly focuses on the fixed learning rate, our results are still applicable for the dynamic learning rate. However, the teacher should be more powerful in synthesizing or choosing examples ($R$ should be larger than fixed learning rate case). In human teaching, it actually makes sense because while teaching a student who learns knowledge with dynamic speed, the teacher should be more powerful so that the student consistently learn fast.
\par
\textbf{Practical aspects of the teacher model.} In practice, we usually want the teacher model to be less and less informative to the student model, scalable to large datasets, efficient to compute. How to make the teacher model scalable, efficient and less informative remains open challenges.

\vspace{-2.5mm}
\section{Experiments}
\vspace{-1mm}
\subsection{Experimental details}
\vspace{-1mm}

\textbf{Performance metric.} We use three metric to evaluate the convergence performance: objective value w.r.t. the training set, difference between $w^t$ and $w^*$ ($\nbr{w^t-w^*}_2$), and the classification accuracy on testing set. 
\par
\vspace{-2mm}
\textbf{Parameters and setup.} Detailed experimental setup is given in Appendix \ref{appendix:exp}. We mostly evaluate the practical pool-based teaching (without rescaling). We evaluate the different teaching strategies in Appendix \ref{appendix:diffstra}, and give more experiments on spherical data (Appendix \ref{appendix:spherical}) and infant ego-centric visual data (Appendix \ref{appendix:infant}). For fairness, learning rates for all methods are the same.

\begin{figure*}[t]
  \centering
  \footnotesize
  \includegraphics[width=6.35in]{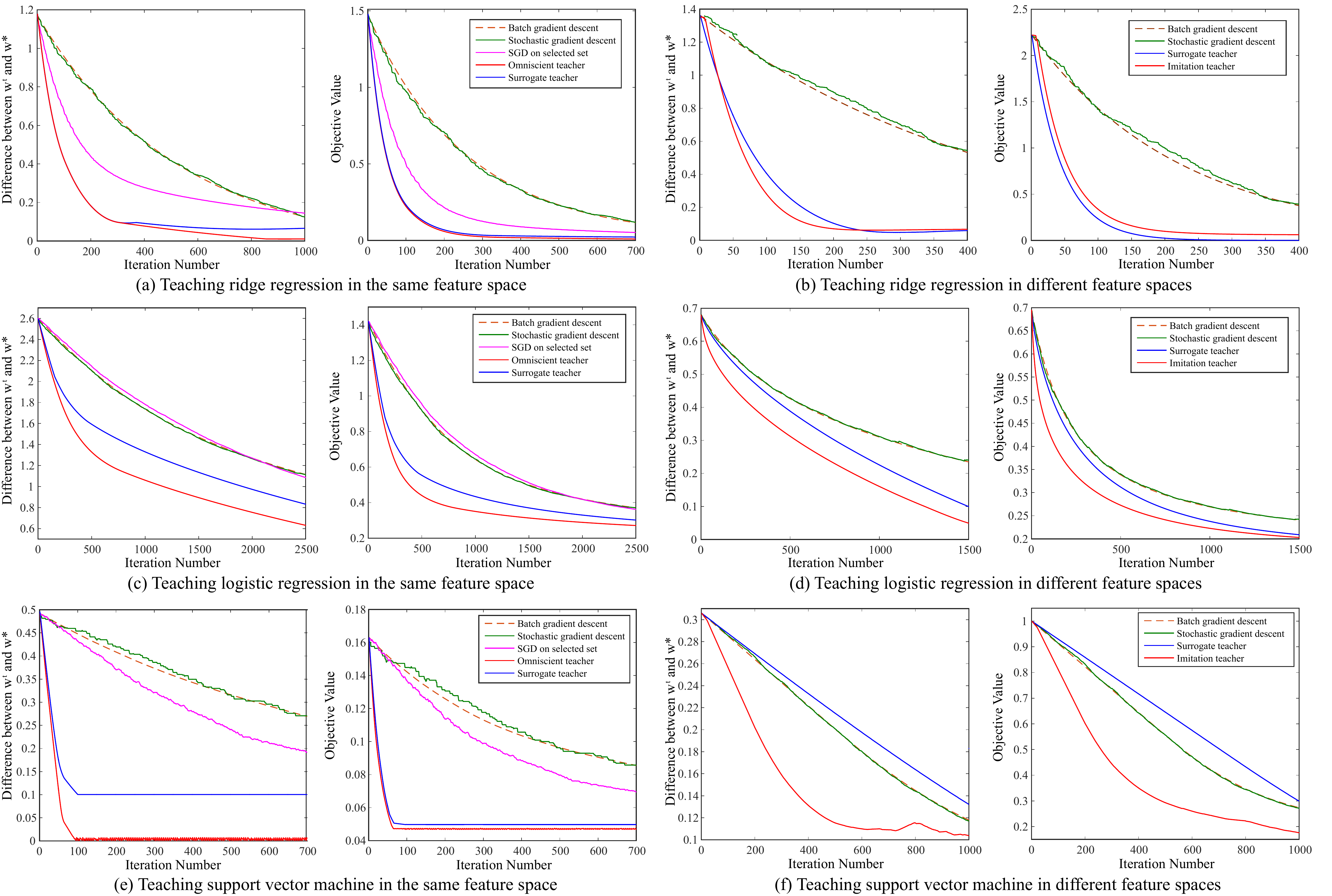}
  \vspace{-2mm}
  \caption{Convergence results on Gaussian distributed data.}\label{exp1}
  \vspace{-3.5mm}
\end{figure*}

\vspace{-2mm}
\subsection{Teaching linear models on Gaussian data}
\vspace{-0.2mm}
\par
\label{gaussian}
This experiment explores the convergence of three typical linear models: ridge regression (RR), logistic regression (LR) and support vector machine (SVM) on Gaussian data. Note that SGD on selected set is to run SGD on the union of all samples selected by the omniscient teacher. For the scenario of different feature spaces, we use a random orthogonal projection matrix to generate the teacher's feature space from student's. All teachers use pool-based teaching strategy. For fair comparisons, we use the same random initialization and the same learning rate.
\begin{figure}[h]
  \centering
  \footnotesize
  \includegraphics[width=2.44in]{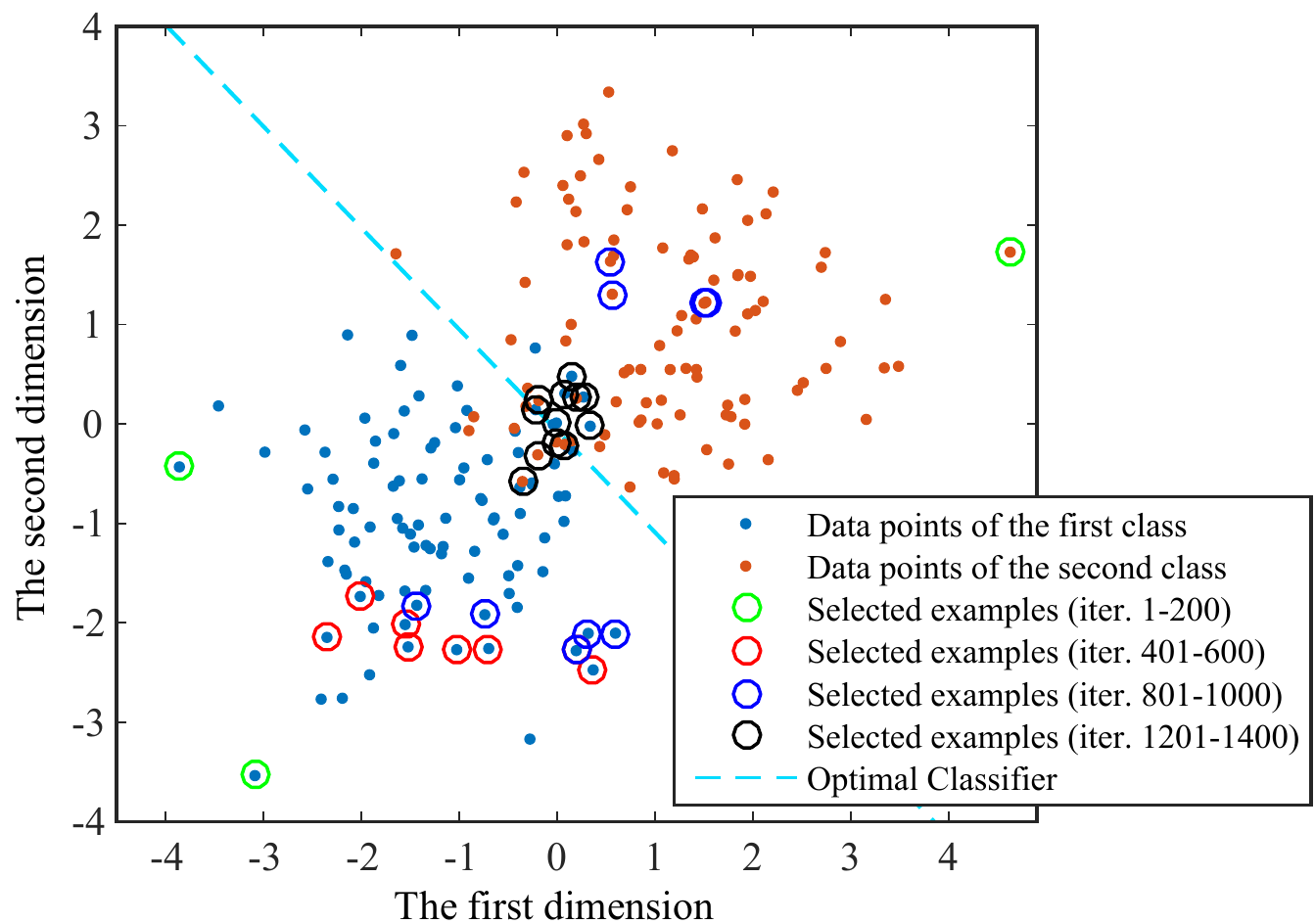}
  \vspace{-3mm}
  \caption{The examples selected by omniscient teacher for logistic regression on 2D binary-class Gaussian data.}\label{2d}
  \vspace{1mm}
\end{figure}
\vspace{-1.5mm}
\par
\textbf{Teaching in the same feature space.} The results in Fig. \ref{exp1} show that the learner can converge much faster using the example provided by the teacher, showing the effectiveness of our teaching models. As expected, we find that the omniscient teacher consistently achieves faster convergence than the surrogate teacher who has no access to $w$. It is because the omniscient teacher always has more information about the learner. More interestingly, our guiding algorithms also consistently outperform SGD on the selected set, showing that the order of inputting training samples matters.
\par
\vspace{-1.3mm}
\textbf{Teaching in different feature spaces.} It is a more practical scenario that teacher and student use different feature spaces. While the omniscient teacher model is no longer applicable here, we teach the student model using the surrogate teacher and the imitation teacher. While the feature spaces are totally different, it can be expected that there will be a mismatch gap between the teacher model parameter and the student model parameter. Even in such a challenging scenario, the experimental results show that our teacher model still outperforms the conventional SGD and batch GD in most cases. One can observe that the surrogate teacher performs poorly in the SVM, which may be caused by the tightness of the approximated lower bound of the $T_2$ term. Compared to the surrogate teacher, the imitation teacher is more stable and consistently improves the convergence in all three linear models.

\vspace{-1.5mm}
\subsection{Teaching Linear Classifiers on MNIST Dataset}
\vspace{-.5mm}
We further evaluate our teacher models on MNIST dataset. We use 24D random features to classify the digits (0/1, 3/5 as examples). We generate the teacher's features using a random projection matrix from the original 24D student's features. Note that, omniscient teacher and surrogate teacher (same space) assume the teacher uses the student's feature space, while surrogate teacher (different space) and imitation teacher assume the teacher uses its own space. From Fig. \ref{mnist_exp}, one can observe that all these teacher model produces significant convergence speedup. We can see that the omniscient teacher converges fastest as expected. Interestingly, our imitation teacher achieves very similar convergence speedup to the omniscient teacher under the condition that the teacher does not know the student's feature space. In Fig.\ref{mnist_vis}, we also show some examples of teacher's selected digit images (0/1 as examples) and find that the teacher tends to select easy example at the beginning and gradually shift the focus to difficult examples. This also has the intrinsic connections with the curriculum learning.

\begin{figure}[t]
  \centering
  \footnotesize
  \includegraphics[width=3.025in]{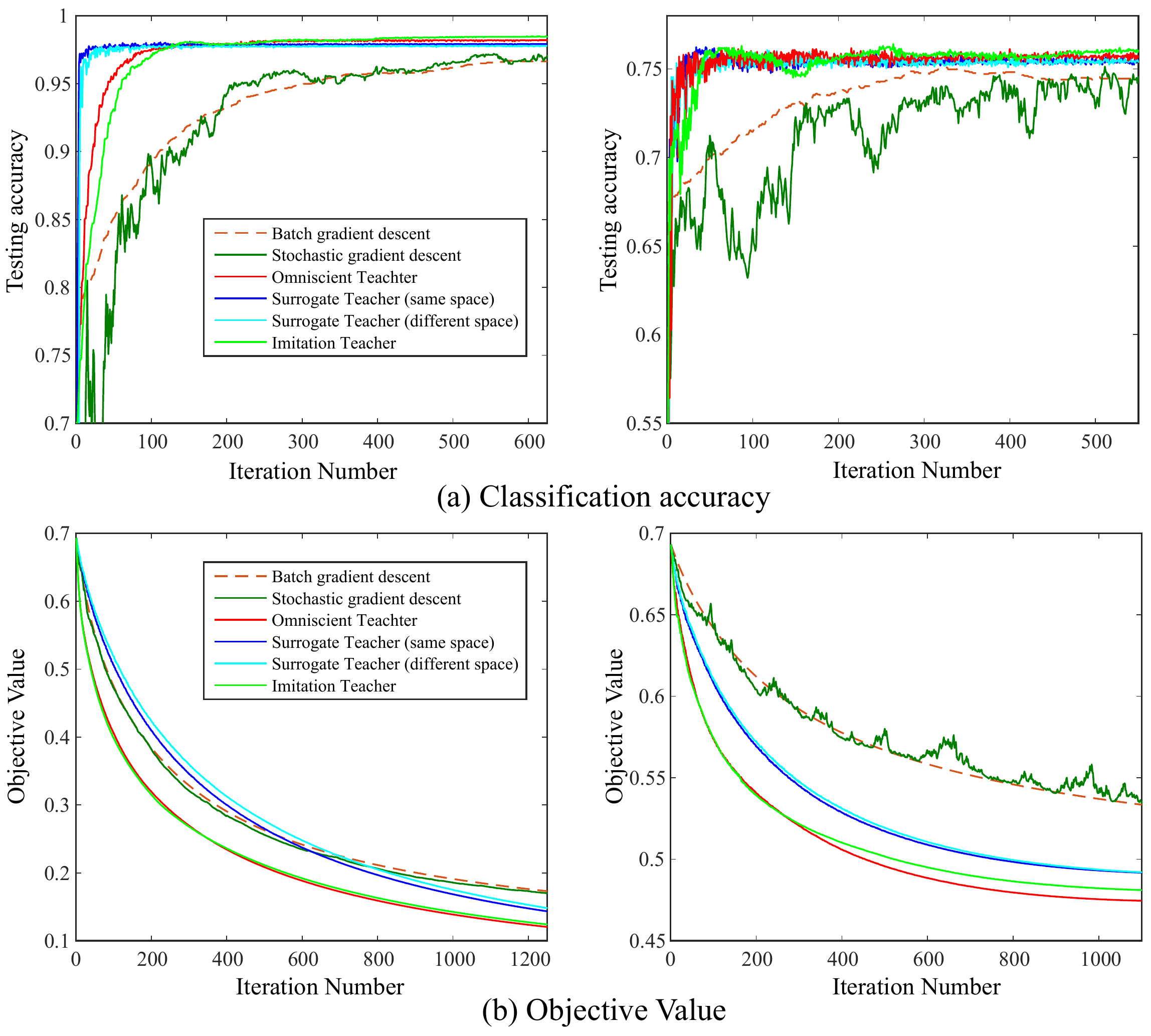}
  \vspace{-3.8mm}
  \caption{Teaching logistic regression on MNIST dataset. Left column: 0/1 classification. Right column: 3/5 classification}\label{mnist_exp}
  \vspace{-1.85mm}
\end{figure}

\begin{figure}[t]
  \centering
  \footnotesize
  \includegraphics[width=3in]{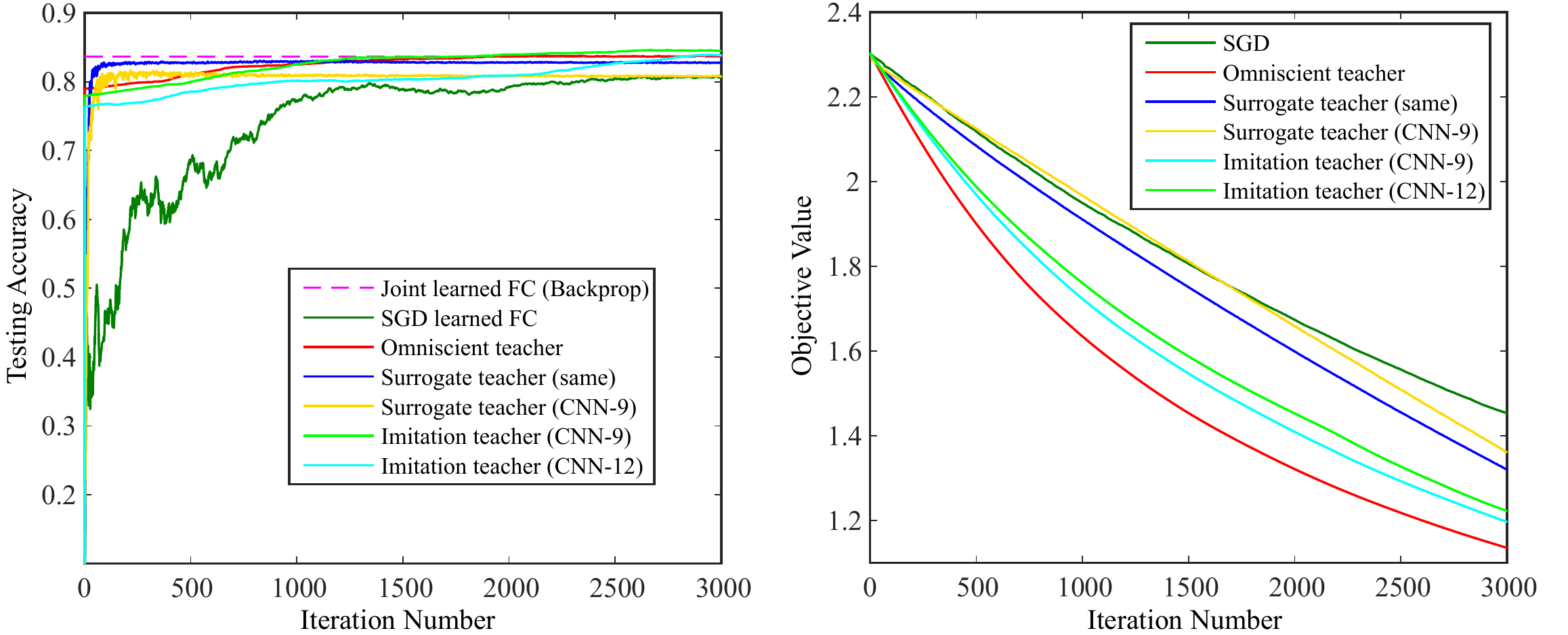}
  \vspace{-3.5mm}
  \caption{Teaching fully connected layers of CNNs on CIFAR-10. Left: testing accuracy. Right: training objective value.}\label{cifar10_exp}
  \vspace{1.5mm}
\end{figure}

\begin{figure}[t]
  \centering
  \footnotesize
  \includegraphics[width=3in]{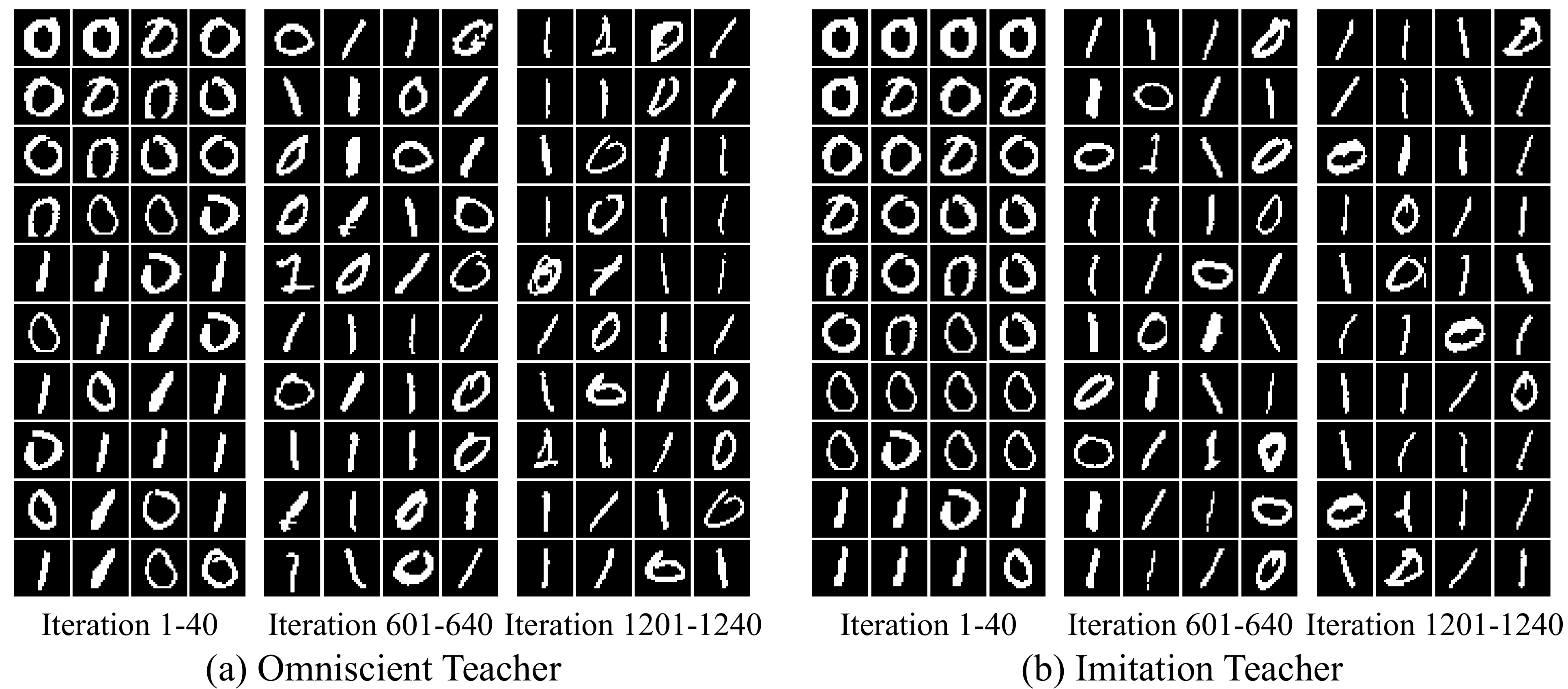}
  \vspace{-3.5mm}
  \caption{Some selected training examples on MNIST.}\label{mnist_vis}
  \vspace{-2.5mm}
\end{figure}

\begin{figure}[t]
  \centering
  \footnotesize
  \includegraphics[width=3.1in]{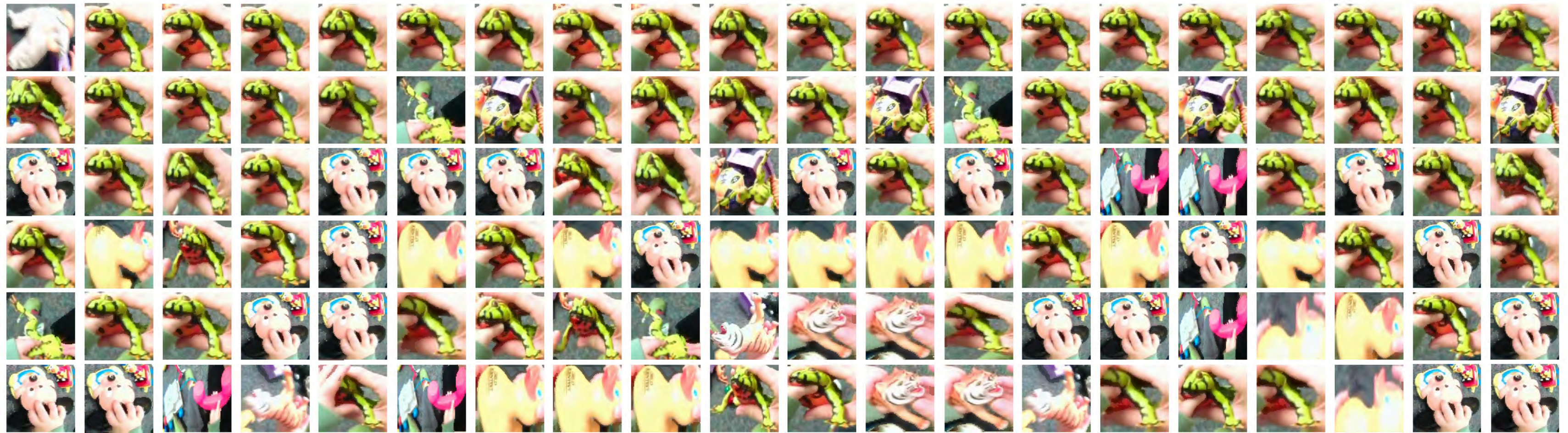}
  \vspace{-2.7mm}
  \caption{Selected training examples by the omniscient teacher on ego-centric data of infants. (The examples are visualized every 100 iteration, with left-to-right and top-to-bottom ordering)}\label{infant_vis}
  \vspace{1.1mm}
\end{figure}

\vspace{-1.9mm}
\subsection{Teaching Fully Connected Layers in CNNs}
\vspace{-0.8mm}
We extend our teacher models from binary classification to multi-class classification. The teacher models are used to teach the final fully connected (FC) layers in convolutional neural network on CIFAR-10. We first train three baseline CNNs (6/9/12 convolution layers, detailed configuration is in Appendix \ref{appendix:exp}) on CIFAR-10 without data augmentation and obtain the 83.5\%, 86.1\%, 87.2\% accuracy. First, we applied the omniscient teacher and the surrogate teacher to the CNN-6 student using the optimal FC layer from the joint backprop training. It is essentially to teach the FC layer in the same feature space. Second, we applied the surrogate teacher and the imitation teacher to the CNN-6 student using the parameters of optimal FC layers from CNN-9 and CNN-12. It is to teach the FC layer in different feature spaces. More interestingly, this different feature space may not necessarily have an invertible one-to-one mapping, but we could still observe convergence speedup using our teacher models. From Fig. \ref{cifar10_exp}, we can see that all the teacher models produces very fast convergence in terms of testing accuracy. Our teacher models can even produces better testing accuracy than the backprop-learned FC layer. For objective value, the omniscient teacher shows the largest convergence speedup, and the imitation teacher performs slightly worse but still much better than the SGD.

\vspace{-1.95mm}
\subsection{Teaching on ego-centric visual data of infants}
\vspace{-0.85mm}
Using our teaching model, we analyze cropped object instances obtained from ego-centric video of an infant playing with toys~\cite{yurovsky2013}. Full detailed settings and results are in Appendix \ref{appendix:infant}. The results in Fig. \ref{infant_vis} demonstrate a strong qualitative agreement between the training examples selected by the omniscient teacher and the order of examples received by a child in a naturalistic play environment. In both cases, the learner experiences extended bouts of viewing the same object. In contrast, the standard SGD learner receives random inputs. Our convergence results demonstrate that the learner converges significantly faster when receiving similar inputs to the child. Previous works have documented the unique temporal structure of the image examples that a child receives during object play~\cite{bambach2016,pereira2014}. We believe these are the first results demonstrating that similar orderings can be obtained via a machine teaching approach.

\vspace{-2.7mm}
\section{Concluding Remarks}
\vspace{-1.2mm}
The paper proposes an iterative machine teaching framework. We elaborate the settings of the framework, and then study two important properties: teaching monotonicity and teaching capability. Based on the framework, we propose three teacher models for gradient learners, and give theoretical analysis for the learner to provably achieve fast convergence. Our theoretical findings are verified by experiments.

\section*{Acknowledgement}
We would like to sincerely thank all the reviewers and Prof. Xiaojin Zhu for the valuable suggestions to improve the paper, Dan Yurovsky and Charlotte Wozniak for their help in collecting the dataset of children's visual inputs during object learning, and Qian Shao for help with the annotations. This project was supported in part by NSF IIS-1218749, NIH BIGDATA 1R01GM108341, NSF CAREER IIS-1350983, NSF IIS-1639792 EAGER, ONR N00014-15-1-2340, NSF Awards (BCS-1524565, BCS-1523982, and IIS-1320348) Nvidia and Intel. In addition, this work was partially supported by the Indiana University Areas of Emergent Research initiative in Learning: Brains, Machines, Children.

\bibliography{guiding}

\bibliographystyle{icml2017}

\clearpage
\newpage

\appendix
\onecolumn

\begin{appendix}

\thispagestyle{plain}
\begin{center}
{\Large \bf Appendix}
\end{center}

\end{appendix}

\section{Details of the Proof}\label{appendix:proof}
\paragraph{Proof of Theorem~\ref{thm:fast_teaching}}
We assume the optimization starts with an initialized weights $w^0$. $t$ is denoted as the iteration index. Let $w_{g}^{t}$ and $w_{s}^{t}$ be the model parameter updated by our omniscient teacher and SGD, respectively. We first consider the case where $t=1$. For SGD, the first gradient update $w_{s}^{1}$ is
\begin{equation}
w_{s}^{1} = w^0 - \eta_t\, \frac{\partial \ell(\inner{w^0}{x_s},y_s)}{\partial w^0}.
\end{equation}
Then we compute the difference between $w_s^1$ and $w^*$:
\begin{equation}
\begin{aligned}
  \nbr{w_s^{1} - w^\ast}_2^2
  &= \nbr{w^0 - \eta_t\, \frac{\partial \ell(\inner{w^0}{x},y)}{\partial w^0} - w^\ast}_2^2 \\
  &= \nbr{w^0 - w^\ast}_2^2 + \eta_t^2 \nbr{\frac{\partial \ell(\inner{w^0}{x},y)}{\partial w^0}}_2^2 - 2\eta_t \inner{w^0 - w^\ast}{\frac{\partial \ell(\inner{w^0}{x},y)}{\partial w^0}}
\end{aligned}
\end{equation}
Because the omniscient teacher is to minimize last two term, so we are guaranteed to have
\begin{equation}
\nbr{w_g^{1} - w^\ast}_2^2\leq\nbr{w_s^{1} - w^\ast}_2^2.
\end{equation}
So with the same initialization $w_g^{0} = w_s^{0}$, $\nbr{w_g^{1} - w^\ast}_2^2\leq\nbr{w_s^{1} - w^\ast}_2^2$ is always true. Then we consider the case where $t=k, k\geq1$. We first compute the difference between $w_g^{k+1}$ and $w^*$:
\begin{equation}\label{wgk}
\begin{aligned}
  \nbr{w_g^{k+1} - w^\ast}_2^2
  &= \nbr{w_g^{k} - \eta_t\, \frac{\partial \ell(\inner{w_g^{k}}{x},y)}{\partial w^{k+1}} - w^\ast}_2^2 \\
  &= \nbr{w_g^{k} - w^\ast}_2^2 + \min_{\{x,y\}}\bigg{\{}\eta_t^2 \nbr{\frac{\partial \ell(\inner{w_g^{k}}{x},y)}{\partial w_g^{k}}}_2^2 - 2\eta_t \inner{w_g^{k} - w^\ast}{\frac{\partial \ell(\inner{w_g^{k}}{x},y)}{\partial w_g^{k}}}\bigg{\}}\\
  &= \nbr{w_g^{k} - w^\ast}_2^2 + \eta_t^2 \nbr{\frac{\partial \ell(\inner{w_g^{k}}{x^k_*},y^k_*)}{\partial w_g^{k}}}_2^2 - 2\eta_t \inner{w_g^{k} - w^\ast}{\frac{\partial \ell(\inner{w_g^{k}}{x^k_*},y^k_*)}{\partial w_g^{k}}}\\
  &=\nbr{w_g^{k} - w^\ast}_2^2 - TV(w_g^{k})
\end{aligned}
\end{equation}
where $x^k_*,y^k_*$ is the sample selected by the omniscient teacher in the $k$-th iteration. Using the given conditions, we can bound the difference between $w_s^{k+1}$ and $w^*$ from below:
\begin{equation}\label{wsk}
\begin{aligned}
  \nbr{w_s^{k+1} - w^\ast}_2^2
  &= \nbr{w_s^{k} - \eta_t\, \frac{\partial \ell(\inner{w_s^{k}}{x^s},y^s)}{\partial w_s^{k}} - w^\ast}_2^2 \\
  &= \nbr{w_s^{k} - w^\ast}_2^2 + \eta_t^2 \nbr{\frac{\partial \ell(\inner{w_s^{k}}{x^k_s},y^k_s)}{\partial w_s^{k}}}_2^2 - 2\eta_t \inner{w_s^{k} - w^\ast}{\frac{\partial \ell(\inner{w_s^{k}}{x^k_s},y^k_s)}{\partial w_s^{k}}}\\
  &\geq \nbr{w_s^{k} - w^\ast}_2^2- TV(w_s^{k})\\
\end{aligned}
\end{equation}
where $x^k_s,y^k_s$ is the sample selected by the random teacher in the $k$-th iteration. Comparing Eq. \ref{wgk} and Eq. \ref{wsk} and using the condition in the theorem, the following inequality always holds under the condition $\nbr{w_g^{k} - w^\ast}_2^2\leq \nbr{w_s^{k} - w^\ast}_2^2$:
\begin{equation}
\nbr{w_s^{k+1} - w^\ast}_2^2 = \nbr{w_s^{k} - w^\ast}_2^2- TV(w_s^{k}) \geq \nbr{w_g^{k} - w^\ast}_2^2 - TV(w_g^{k})= \nbr{w_g^{k+1} - w^\ast}_2^2.
\end{equation}
Further because we already know that $\nbr{w_g^{1} - w^\ast}_2^2\leq\nbr{w_s^{1} - w^\ast}_2^2$, using induction we can conclude that $\nbr{w_g^{t} - w^\ast}_2^2$ will be always not larger than $\nbr{w_s^{t} - w^\ast}_2^2$ ($t$ can be any iteration). Therefore, in each iteration the omniscient teacher can always converge not slower than random teacher (SGD).
\QED
\paragraph{Proof of Proposition~\ref{prop:square_loss}}
Consider the square loss $\ell(\inner{w}{x},y)=(\inner{w}{x}-y)^2$, we have $\frac{\partial\ell(\inner{w}{x},y)}{\partial w}=2(\inner{w}{x}-y)x$. Suppose we are given two initializations $w_1, w_2$ satisfying $\nbr{w_1-w_*}^2_2\leq\nbr{w_2-w_*}^2_2$. For square loss, we first write out
\begin{equation}
\begin{aligned}
&\nbr{w_1-w^*}^2-TV(w_1)= \nbr{w_1-w^*}^2 + \min_{x\in\mathcal{X},y\in\mathcal{Y}}\{\eta_t^2 T_1(x,y|w_1) - 2\eta_t T_2(x,y|w_1)\}\\
=&\nbr{w_1-w^*}^2+ \min_{\{x,y\}}\bigg{\{}\eta_t^2 \nbr{\frac{\partial \ell(\inner{w_1}{x},y)}{\partial w_1}}_2^2 - 2\eta_t \inner{w_1 - w^\ast}{\frac{\partial \ell(\inner{w_1}{x^*},y^*)}{\partial w_1}}\bigg{\}}\\
=&\nbr{w_1-w^*}^2+\left\{ {\begin{array}{*{20}{l}}
{2(\frac{R}{\nbr{w_1-w^*}})^2\nbr{w_1-w^*}^2(w_1-w^*)},~~\textnormal{if}~ \frac{R}{\nbr{w_1-w^*}}<\frac{1}{\eta_t}\\
{-\nbr{w_1-w^*}^2,~~\textnormal{if}~ \frac{R}{\nbr{w_1-w^*}}\geq\frac{1}{\eta_t}}
\end{array}} \right.
\end{aligned}
\end{equation}
Similarly for $w_2$, we have
\begin{equation}
\begin{aligned}
&\nbr{w_2-w^*}^2-TV(w_2)\\
=&\nbr{w_2-w^*}^2+\left\{ {\begin{array}{*{20}{l}}
{2(\frac{R}{\nbr{w_2-w^*}})^2\nbr{w_2-w^*}^2(w_2-w^*)},~~\textnormal{if}~ \frac{R}{\nbr{w_2-w^*}}<\frac{1}{\eta_t}\\
{-\nbr{w_2-w^*}^2,~~\textnormal{if}~ \frac{R}{\nbr{w_2-w^*}}\geq\frac{1}{\eta_t}}
\end{array}} \right.
\end{aligned}
\end{equation}
There will be three scenarios to consider: (1) $R\eta_t\leq\nbr{w_1-w^*}\leq\nbr{w_2-w^*}$; (2) $\nbr{w_1-w^*}\leq R\eta_t\leq\nbr{w_2-w^*}$; (3) $\nbr{w_1-w^*}\leq\nbr{w_2-w^*}\leq R\eta_t$. It is easy to verify that under all three scenarios, we have
\begin{equation}
\begin{aligned}
&\nbr{w_1-w^*}^2-TV(w_1)\leq&\nbr{w_2-w^*}^2-TV(w_2)
\end{aligned}
\end{equation}
\QED
To simplify notations, we denote $\beta_{\rbr{\inner{w}{x}, y}}=\nabla_{\inner{w}{x}}\ell\rbr{\inner{w}{x}, y}$ for a loss function $\ell(\cdot, \cdot)$ in the following proof. For omniscient teacher, $(\hat x,\hat y)$ denotes a specific construction of $(x,y)$. Notice that $(\xtil,\ytil)$ will not be used in omniscient teacher case to avoid ambiguity, since the student and the teacher use the same representation space.

\paragraph{Proof of Theorem~\ref{thm:opt_synthesis}}
At $t$-step, the omniscient teacher selects the samples via optimization
$$
\min_{x\in \Xcal, y\in \Ycal}\eta^2 \|\nabla_{w^t} \ell\rbr{\inner{w^t}{x}, y}\|^2 - 2\eta\inner{w^t - w^\ast}{\nabla_{w^t} \ell\rbr{\inner{w^t}{x}, y}}.
$$
We denote $\hat x = \gamma\rbr{w^t-w^\ast}$ and $\hat y \in \Ycal$, since $\gamma \rbr{w-w^\ast}\in \Xcal$, we have
\begin{eqnarray}\label{eq:reduction}
&&\min_{x\in \Xcal, y\in \Ycal}\eta^2 \|\nabla_{w^t} \ell\rbr{\inner{w^t}{x}, y}\|^2 - 2\eta\inner{w^t - w^\ast}{\nabla_{w^t} \ell\rbr{\inner{w^t}{x}, y}}\\
&\le& \rbr{\eta^2\beta_{(\inner{w^t}{\hat x}, \hat y)}^2\gamma^2 - 2\eta\beta_{(\inner{w^t}{\hat x}, \hat y)}\gamma}\|w^t - w^\ast\|_2^2.
\end{eqnarray}
Plug Eq.~\eq{eq:reduction} into the recursion Eq.~\eq{graddecomp}, we have
\begin{equation}\label{thm4_recursion}
\begin{aligned}
  \nbr{w^{t+1} - w^\ast}_2^2
  &= \min_{x\in \Xcal, y\in \Ycal}\nbr{w^t - \eta\, \frac{\partial \ell(\inner{w}{x},y)}{\partial w} - w^\ast}_2^2 \\
  &= \nbr{w^t - w^\ast}_2^2 + \min_{x\in \Xcal, y\in \Ycal}\eta^2{\nbr{\frac{\partial \ell(\inner{w^t}{x},y)}{\partial w^t}}_2^2}- 2\eta {\inner{w^t - w^\ast}{\frac{\partial \ell(\inner{w^t}{x},y)}{\partial w^t}}} \\
  &\le \rbr{1 + \eta^2\beta_{(\inner{w^t}{\hat x}, \hat y)}^2\gamma^2 - 2\eta\beta_{(\inner{w^t}{\hat x}, \hat y)}\gamma}\|w^t - w^\ast\|_2^2 = \rbr{1 - \eta\beta_{(\inner{w^t}{\gamma\rbr{w^t - w^\ast}}, \hat y)}\gamma}^2\|w^t - w^\ast\|_2^2.
\end{aligned}
\end{equation}
First we let $\nu(\gamma)=\min_{w,y}\gamma\nabla_{\inner{w}{\gamma\rbr{w-w^\ast}}}\ell\rbr{\inner{w}{\gamma\rbr{w-w^\ast}}, y}$. Then we have the condition $0< \nu(\gamma) \le \gamma\beta_{(\inner{w}{\gamma\rbr{w-w^\ast}}, \hat y)}\le \frac{1}{\eta}<\infty$ for any $w,y$, so we can obtain
$$
0\le 1 - \gamma\eta\beta_{(\inner{w}{\gamma\rbr{w-w^\ast}}, \hat y)} \le 1 - \eta\nu(\gamma),
$$
after simplifying $\nu(\gamma)$ to $\nu$, we therefore have the following inequality from Eq. \eqref{thm4_recursion}:
$$
\nbr{w^{t+1} - w^\ast}_2^2 \le \rbr{1 - \eta\nu}^2 \nbr{w^{t} - w^\ast}_2^2,
$$
Thus we can have the exponential convergence:
$$
\nbr{w^{t} - w^\ast}_2 \le \rbr{1 - \eta\nu}^{t}\nbr{w^{0} - w^\ast}_2,
$$
in other words, the student needs $\rbr{\log\frac{1}{1 - \eta\nu}}^{-1}\log\frac{\|w^0 - w^\ast\|}{\epsilon}$ samples to achieve an $\epsilon$-approximation of $w^\ast$.

\QED

\paragraph{Proof of Proposition~\ref{prop:general_teachable}}

Because $\ell\rbr{\inner{w}{x}, y}$ is $\zeta_1$-strongly convex w.r.t. $w$, we have
$$
\zeta_1\rbr{\ell\rbr{\inner{w}{x}, y} - \min_{w}\ell\rbr{\inner{w}{x}, y}}\le \nbr{\nabla_w \ell\rbr{\inner{w}{x}, y}}^2 = \beta_{(\inner{w}{x}, y)}^2\nbr{x}^2, \quad \forall \cbr{x, y}\in \Xcal\times\Ycal,
$$
where $\Xcal = \cbr{x\in \RR^d, \nbr{x}\le R}$. Using $\hat{x}=\gamma(w-w^*), \gamma\geq0$, we have
$$
\sqrt{\zeta_1\rbr{\ell\rbr{\inner{w}{\gamma(w-w^*)}, y} - \min_{w}\ell\rbr{\inner{w}{\gamma(w-w^*)}, y}}} \le \beta_{(\inner{w}{\gamma(w-w^*)}, y)}\gamma\|w-w^*\|.
$$
We assume the loss function is always non-negative, i.e., $\ell\rbr{\inner{w}{x}, y}\geq 0$. Therefore we have
$$
\sqrt{\zeta_1\rbr{\ell\rbr{\inner{w}{\gamma(w-w^*)}, y}}} \le \beta_{(\inner{w}{\gamma(w-w^*)}, y)}\gamma\|w-w^*\|.
$$
Because $\ell\rbr{\inner{w}{x}, y}$ is $\zeta$-strongly convex w.r.t. $w$, it is also $\zeta_2$-strongly convex w.r.t. $\inner{w}{x}$. Then we perform Taylor expansion to $\ell\rbr{\inner{w}{\gamma(w-w^*)}, y}$ w.r.t. $\inner{w}{x}$ at the point $\inner{w^*}{x}$ and obtain
$$
\ell\rbr{\inner{w}{\gamma(w-w^*)}, y} \geq \ell\rbr{\inner{w}{\gamma(w^*-w^*)}, y} + \nabla_{\inner{w}{x}}\ell\rbr{\inner{w}{\gamma(w^*-w^*)}, y}(w-w^*)^Tx + \frac{\zeta_2}{2}\|(w-w^*)^Tx\|^2
$$
which leads to
$$
\ell\rbr{\inner{w}{\gamma(w-w^*)}, y} \geq \frac{\zeta_2}{2}\gamma^2\|w-w^*\|^4
$$
Combining pieces, we have
$$
\sqrt{\frac{\zeta_1\zeta_2}{2}}\gamma\|w-w^*\| \le \beta_{(\inner{w}{\gamma(w-w^*)}, y)}\gamma.
$$
Then if we set $\gamma=\min\big{\{} \sqrt{\frac{2}{\zeta_1\zeta_2}}\frac{1}{\|w-w^*\|\eta},  \frac{R}{\|w-w^*\|} \big{\}}$, we can have $\frac{1}{\eta}\leq\beta_{(\inner{w}{\gamma(w-w^*)}, y)}\gamma$.
Because $\ell\rbr{\inner{w}{x}, y}$ is Lipschitz smooth w.r.t. $\inner{w}{x}$ with parameter $L$, we have
$$
\nbr{\beta_{(\inner{w}{ x}, y)}-\beta_{(\inner{w^*}{ x}, y)}}\leq LR\nbr{w-w^*}
$$
Because $\beta_{(\inner{w^*}{ x}, y)}=0$, we have the following inequality:
$$
\nbr{\beta_{(\inner{w}{ x}, y)}}\leq LR\nbr{w-w^*}
$$
If we multiply both side with $\gamma$, we can have
$$
\beta_{(\inner{w}{ x}, y)}\gamma\leq LR\nbr{w-w^*}\gamma
$$
By setting $\gamma$ as $\frac{1}{LR\eta\nbr{w-w^*}}$, we arrive at $\beta_{(\inner{w}{ x}, y)}\gamma<\frac{1}{\eta}$. Combining pieces, as long as we set
$$
\gamma=\min\bigg{\{} \sqrt{\frac{2}{\zeta_1\zeta_2}}\frac{1}{\eta\|w-w^*\|},  \frac{R}{\|w-w^*\|},  \frac{1}{LR\eta\nbr{w-w^*}} \bigg{\}},
$$
then we can have
$$
0<c\leq\beta_{(\inner{w}{\gamma \hat x}, \hat y)}\gamma \le \frac{1}{\eta}.
$$
where $c$ is a non-zero positive constant. Therefore, we achieve the condition for the exponential synthesis-based teaching.

\QED

By the Proposition~\ref{prop:general_teachable}, the absolute loss and sqaure loss are exponentially teachable in synthesis-based case, and we can obtain $\gamma$ by plugging into the general form. We will tighten the $\gamma$ up by analyzing absolute loss and square loss separately. Besides that, we also show the commonly used loss functions for classification, \eg, hinge loss and logistic loss, are also exponentially teachable in synthesis-based teaching if $\nbr{w^\ast}$ can be bounded.

\begin{proposition}
Absolute loss is exponentially teachable in synthesis-based teaching.
\end{proposition}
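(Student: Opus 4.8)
The plan is to verify the hypothesis of Theorem~\ref{thm:opt_synthesis} directly, so that exponential teachability follows at once. The key observation is that for the absolute loss $\ell(\inner{w}{x},y)=\abr{\inner{w}{x}-y}$ the derivative with respect to the prediction is $\beta_{(\inner{w}{x},y)}=\nabla_{\inner{w}{x}}\ell(\inner{w}{x},y)=\mathrm{sign}(\inner{w}{x}-y)\in\cbr{-1,+1}$ whenever $\inner{w}{x}\neq y$. Hence, along the synthesis construction $\hat x=\gamma(w-w^\ast)$, the quantity that Theorem~\ref{thm:opt_synthesis} constrains is simply $\gamma\beta_{(\inner{w}{\hat x},\hat y)}=\pm\gamma$, whose sign is governed by the free choice of $\hat y\in\RR$ and whose magnitude is governed by the free choice of $\gamma$, subject only to $\abr{\gamma}\le R/\nbr{w-w^\ast}$.

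First I would fix an arbitrary $w\neq w^\ast$ (the case $w=w^\ast$ being vacuous), set $\gamma=\min\cbr{1/\eta,\,R/\nbr{w-w^\ast}}>0$, let $\hat x=\gamma(w-w^\ast)$, and let $\hat y=\inner{w}{\hat x}-1$ so that $\hat y<\inner{w}{\hat x}$. Then $\hat x\in\Xcal$ since $\nbr{\hat x}=\gamma\nbr{w-w^\ast}\le R$, the bound $\abr{\gamma}\le R/\nbr{w-w^\ast}$ holds by construction, and $\gamma\beta_{(\inner{w}{\hat x},\hat y)}=\gamma\cdot\mathrm{sign}\rbr{\inner{w}{\hat x}-\hat y}=\gamma$, so that $0<\gamma\beta_{(\inner{w}{\hat x},\hat y)}=\gamma\le 1/\eta$, which is exactly the required condition $0<\gamma\nabla_{\inner{w}{\hat x}}\ell(\inner{w}{\hat x},\hat y)\le 1/\eta$. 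Applying Theorem~\ref{thm:opt_synthesis} then yields an $\Ocal(C_1^{\gamma,\eta}\log\frac1\epsilon)$ sample bound.

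To make the rate explicit I would compute $\nu(\gamma)=\min_{w,y}\gamma\beta_{(\inner{w}{\hat x},y)}$. With the choice above this equals $\gamma(w)=\min\cbr{1/\eta,\,R/\nbr{w-w^\ast}}$; moreover, for this choice every update obeys $\nbr{w^{t+1}-w^\ast}\le\rbr{1-\eta\gamma^{(t)}}\nbr{w^t-w^\ast}$ with $\eta\gamma^{(t)}\in(0,1]$, so $\nbr{w^t-w^\ast}$ is non-increasing and the trajectory stays in the ball $\cbr{w:\nbr{w-w^\ast}\le\nbr{w^0-w^\ast}}$. On that ball $\gamma(w)\ge\nu:=\min\cbr{1/\eta,\,R/\nbr{w^0-w^\ast}}>0$, giving $C_1^{\gamma,\eta}=\rbr{\log\frac{1}{1-\eta\nu}}^{-1}<\infty$. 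I would then record the two regimes consistent with the discussion after Theorem~\ref{thm:opt_synthesis}: if $R$ is large enough that $1/\eta\le R/\nbr{w^0-w^\ast}$, then $\gamma=1/\eta$, $\nu=1/\eta$, $C_1^{\gamma,\eta}=0$, and a single example already drives $w^1$ exactly to $w^\ast$; otherwise $C_1^{\gamma,\eta}>0$ and the student converges geometrically.

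There is no serious technical obstacle here; the points that need care are (i) avoiding the non-differentiable point $\inner{w}{x}=y$ of the absolute loss, which is why $\hat y$ is taken strictly below $\inner{w}{\hat x}$ (strictly above if one instead used a negative $\gamma$); (ii) checking $\hat x\in\Xcal$, i.e.\ $\nbr{\hat x}\le R$, which is precisely what forces the minimum in the definition of $\gamma$; and (iii) extracting a \emph{uniform} positive lower bound $\nu$ rather than a $w$-dependent one, which is what boundedness of the teaching trajectory supplies.
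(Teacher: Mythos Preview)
Your proposal is correct and follows essentially the same route as the paper: compute $\beta=\mathrm{sign}(\inner{w}{x}-y)$, set $\hat x=\gamma(w-w^\ast)$, pick $\hat y$ so that $\beta=+1$, choose $\gamma=\min\{1/\eta,\,R/\nbr{w^0-w^\ast}\}$, and use monotonic decrease of $\nbr{w^t-w^\ast}$ to make the bound uniform. The only cosmetic difference is the choice of label: the paper takes $\hat y=\inner{w^\ast}{\hat x}$ (the ``true'' regression target, giving $\inner{w}{\hat x}-\hat y=\gamma\nbr{w-w^\ast}^2>0$), whereas you take $\hat y=\inner{w}{\hat x}-1$; both are valid in $\Ycal=\RR$ and both force the sign to be $+1$.
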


\begin{proof}
To show one loss function is exponentially teachable in synthesis-based case, we just need to find the appropriate $\gamma$ such that the learning intensity is bounded below and above, according to Theorem \ref{thm:opt_synthesis}. For the absolute loss, \ie,
$$
\ell\rbr{\inner{w}{x}, y} = \abr{\inner{w}{x} - y},
$$
its sub-gradient is
$$
\nabla_{w}\ell(\inner{w}{x}, y) = \sgn(\inner{w}{x} - y)x,
$$
and thus, the learning intensity $\beta_{(\inner{w}{x}, y)} = \sgn\rbr{\inner{w}{x} - y}$. For $w\neq w^\ast$, plugging $\hat x= \gamma\rbr{w - w^\ast}$ and $\hat y = \inner{w^\ast}{\gamma\rbr{w - w^\ast}}$ into the learning intensity, we have
$$
\beta_{\gamma\inner{w}{\hat x}, \hat y}\gamma = \sgn\rbr{\gamma^2\inner{w - w^\ast}{w - w^\ast}}\gamma = \gamma.
$$
Recall that $\gamma\neq 0$, $\abr{\gamma} \le \frac{R}{\nbr{w^t-w^\ast}}$, $\forall t\in \NN$, we have
$$
\gamma \le \min_{t\in \NN} \frac{R}{\nbr{w^t-w^\ast}} :=C.
$$
Set $\gamma = \min\{C, \frac{1}{\eta}\}$, we have $\nu = \min\{C, \frac{1}{\eta}\}$. Therefore, we obtain the exponential decay. In fact, since the $\nbr{w^t-w^\ast}$ decreases in every step, we have $C = \frac{R}{\nbr{w^0-w^\ast}}$. In following proof, we will follow the same argument to use this fact.
\end{proof}

\begin{proposition}
Square loss is exponentially teachable in synthesis-based teaching.
\end{proposition}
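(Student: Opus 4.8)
The plan is to verify directly the hypothesis of Theorem~\ref{thm:opt_synthesis} for the square loss $\ell(\inner{w}{x}, y) = \frac{1}{2}\rbr{\inner{w}{x} - y}^2$, whose learning intensity is $\beta_{(\inner{w}{x}, y)} = \nabla_{\inner{w}{x}}\ell = \inner{w}{x} - y$. Mimicking the construction used for the absolute loss, I would feed the student a rescaled copy of the current discrepancy, labelled by the target model: take $\hat x = \gamma\rbr{w - w^\ast}$ and $\hat y = \inner{w^\ast}{\hat x} = \gamma\inner{w^\ast}{w - w^\ast} \in \Ycal$ (the regression case $\Ycal = \RR$). Then
$$\gamma\,\beta_{(\inner{w}{\hat x}, \hat y)} = \gamma\rbr{\inner{w}{\hat x} - \hat y} = \gamma^2\inner{w - w^\ast}{w - w^\ast} = \gamma^2\nbr{w - w^\ast}^2,$$
which is already strictly positive whenever $w \neq w^\ast$ (the case $w = w^\ast$ being vacuous).

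Next I would choose $\gamma$ to make this quantity simultaneously at most $1/\eta$ and bounded below by a positive constant uniformly in $w$. The natural choice is $\gamma = \gamma(w) = \frac{c}{\nbr{w - w^\ast}}$ with $c := \min\cbr{\frac{1}{\sqrt{\eta}},\, R}$. This is admissible: $\nbr{\hat x} = \abr{\gamma}\nbr{w - w^\ast} = c \le R$, i.e. $\abr{\gamma} \le R/\nbr{w - w^\ast}$, so $\hat x \in \Xcal$. With it the learning intensity collapses to the constant $\gamma\,\beta_{(\inner{w}{\hat x}, \hat y)} = c^2 = \min\cbr{1/\eta,\, R^2}$, which is strictly positive and at most $1/\eta$; hence $0 < \gamma\nabla_{\inner{w}{\hat x}}\ell(\inner{w}{\hat x}, \hat y) \le 1/\eta$ for every $w$, which is exactly the condition of Theorem~\ref{thm:opt_synthesis} with $\nu(\gamma) = \min\cbr{1/\eta, R^2}$. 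Invoking that theorem then yields $\nbr{w^t - w^\ast} \le \rbr{1 - \eta\nu}^t\nbr{w^0 - w^\ast}$, so an $\epsilon$-approximation of $w^\ast$ is reached in $\Ocal\rbr{C_1^{\gamma, \eta}\log\frac{1}{\epsilon}}$ samples with $C_1^{\gamma, \eta} = \rbr{\log\frac{1}{1 - \eta\min\{1/\eta, R^2\}}}^{-1}$; when $R^2 \ge 1/\eta$ this gives $C_1^{\gamma, \eta} = 0$, i.e. one example suffices, matching the $R$-dependent behaviour discussed after Theorem~\ref{thm:opt_synthesis}.

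The one point that needs care — the \emph{hard part}, though it is mild — is that $\gamma$ must scale like $\nbr{w - w^\ast}^{-1}$: a single fixed $\gamma$ would make $\gamma\,\beta_{(\inner{w}{\hat x}, \hat y)} = \gamma^2\nbr{w - w^\ast}^2 \to 0$ as $w^t \to w^\ast$, forcing $\nu(\gamma) = 0$ and destroying the geometric rate. Rescaling exactly cancels this decay, which is why the square loss is exponentially teachable in synthesis-based teaching even though, along the feeding direction $\hat x$, it is merely convex rather than strongly convex. I would close by remarking that this explicit $\gamma$ and the resulting rate are sharper than what one would get by substituting the strong-convexity and Lipschitz-smoothness constants of the square loss into the generic bound of Proposition~\ref{prop:general_teachable}.
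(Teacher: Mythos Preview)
Your proof is correct and follows essentially the same route as the paper: the same construction $\hat x = \gamma(w-w^\ast)$, $\hat y = \inner{w^\ast}{\hat x}$, the same computation $\gamma\beta = \gamma^2\nbr{w-w^\ast}^2$, and the same scaling $\gamma \propto \nbr{w-w^\ast}^{-1}$ (the paper's $\gamma = \min\{1/(\sqrt{2\eta}\nbr{w^t-w^\ast}),\, R/\nbr{w^t-w^\ast}\}$ differs from your $c/\nbr{w-w^\ast}$ only by the factor of $2$ coming from its unnormalised square loss). Your write-up is in fact more explicit than the paper's about verifying $\nbr{\hat x}\le R$, identifying $\nu(\gamma)$, and explaining why a $w$-dependent $\gamma$ is required.
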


\begin{proof}
For square loss, \ie,
$$
\ell\rbr{\inner{w}{x}, y} = \rbr{\inner{w}{x} - y}^2,
$$
its gradient is
$$
\nabla_{w}\ell\rbr{\inner{w}{x}, y} = 2\rbr{\inner{w}{x} - y}x,
$$
and thus, the learning intensity $\beta_{\inner{w}{x}, y} = 2\rbr{\inner{w}{x} - y}$. For $w\neq w^\ast$, plugging $\hat x = \gamma\rbr{w - w^\ast}$ and $\hat y = \inner{w^\ast}{\gamma\rbr{w - w^\ast}}$ into the learning intensity, we have
$$
\beta_{(\inner{w}{\hat x}, \hat y)} \gamma = 2\gamma^2\nbr{w - w^\ast}^2.
$$
Set $\gamma = \min\cbr{\frac{1}{\sqrt{2\eta}\nbr{w^t - w^\ast}}, \frac{R}{\nbr{w^t - w^\ast}}}$, we achieve the exponential teachable condition.
\end{proof}

\begin{proposition}
Hinge loss is exponentially teachable in synthesis-based teaching if $\nbr{w^\ast}\le 1$.
\end{proposition}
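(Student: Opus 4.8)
The plan is to instantiate the exponential-teachability criterion of Theorem~\ref{thm:opt_synthesis}, exactly in the style of the preceding absolute- and square-loss arguments: exhibit a fixed $\gamma$ and a label $\hat y$ so that, with $\hat x=\gamma(w-w^\ast)$, the learning intensity $\gamma\beta_{(\inner{w}{\hat x},\hat y)}$ is trapped between a positive constant and $1/\eta$ for every $w$. For the hinge loss $\ell(\inner{w}{x},y)=\max(1-y\inner{w}{x},0)$ the (sub)gradient is $\nabla_w\ell=-yx$ on the active region $\{y\inner{w}{x}<1\}$ and $0$ on $\{y\inner{w}{x}>1\}$, so $\beta_{(\inner{w}{x},y)}=-y$ whenever the example is active. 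I would take $\gamma>0$ and $\hat y=-1$; then $\hat x=\gamma(w-w^\ast)$, and as long as the example is active we get $\gamma\beta_{(\inner{w}{\hat x},\hat y)}=\gamma>0$, which already has the right sign.

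The first substantive step is to show this synthetic example is always active, i.e. $\hat y\inner{w}{\hat x}=-\gamma\inner{w}{w-w^\ast}<1$ for all $w$. If $\inner{w}{w-w^\ast}\ge 0$ this is immediate. Otherwise $\|w\|^2<\inner{w}{w^\ast}\le\|w\|\,\|w^\ast\|\le\|w\|$, which forces $\|w\|<1$ and hence $|\inner{w}{w-w^\ast}|=\inner{w}{w^\ast}-\|w\|^2\le\|w\|(1-\|w\|)\le\frac14<1$; this is the only place the hypothesis $\|w^\ast\|\le 1$ enters. Consequently any $\gamma\le 1$ yields $-\gamma\inner{w}{w-w^\ast}\le|\inner{w}{w-w^\ast}|<1$, so the example lies strictly inside the active region and there is no subgradient ambiguity.

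Then I would finish by choosing $\gamma=\min\{R/\|w^0-w^\ast\|,\;1/\eta,\;1\}$. This gives $0<\gamma\beta_{(\inner{w}{\hat x},\hat y)}=\gamma\le 1/\eta$ for every $w$, and also $\|\hat x\|=\gamma\|w^t-w^\ast\|\le\gamma\|w^0-w^\ast\|\le R$ along the trajectory, since $\|w^t-w^\ast\|$ is non-increasing once the contraction guaranteed by Theorem~\ref{thm:opt_synthesis} takes effect (the same remark already used in the absolute-loss proof). Hence the hypotheses of Theorem~\ref{thm:opt_synthesis} hold with $\nu(\gamma)=\gamma$, and the student obtains an $\epsilon$-approximation of $w^\ast$ in $\Ocal(C_1^{\gamma,\eta}\log\frac1\epsilon)$ samples, so the hinge loss is exponentially teachable in synthesis-based teaching.

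The main obstacle — and the reason the extra assumption $\|w^\ast\|\le1$ is needed — is the flat region of the hinge loss: on $\{y\inner{w}{x}>1\}$ the gradient vanishes, so a carelessly large $\gamma$ would produce a zero update and violate the strict lower bound $0<\gamma\beta$. Bounding how negative $\inner{w}{w-w^\ast}$ can be (equivalently, keeping $\gamma(w-w^\ast)$ out of the flat region) is precisely what the norm bound on $w^\ast$ provides, and capping $\gamma$ at $1$ converts that bound into activeness uniformly over all iterates; everything else is the same bookkeeping as in the absolute- and square-loss cases, possibly with a tighter $\gamma$ obtained by keeping the constant $\frac14$ rather than $1$.
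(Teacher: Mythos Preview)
Your proof is correct and follows essentially the same route as the paper: choose $\hat y=-1$ and $\gamma>0$, use the hypothesis $\|w^\ast\|\le 1$ to guarantee that the synthetic example $\hat x=\gamma(w-w^\ast)$ stays in the active region of the hinge loss (so $\beta_{(\inner{w}{\hat x},\hat y)}=-\hat y=1$), and then read off $\gamma\beta=\gamma\in(0,1/\eta]$ to invoke Theorem~\ref{thm:opt_synthesis}. Your sharper bound $|\inner{w}{w-w^\ast}|\le\tfrac14$ together with the extra cap $\gamma\le 1$ makes the activeness verification cleaner than the paper's version, which only records $\inner{w}{w-w^\ast}>-1$ and then sets $\gamma=\min\{1/\eta,\,R/\|w^0-w^\ast\|\}$ without an explicit size restriction on $\gamma$.
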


\begin{proof}
For hinge loss, \ie,
$$
\ell\rbr{\inner{w}{x}, y} = \max\rbr{1 - y\inner{w}{x}, 0},
$$
as long as $1 - y\inner{w}{x} >0$, its subgradient will be
$$
\nabla_{w}\ell\rbr{\inner{w}{x}, y} = -yx.
$$
Denote $\hat x = \gamma\rbr{w - w^\ast}$, we have $\beta_{\inner{w}{\hat x}, \hat y} = -\hat y$ where $\hat y \in \cbr{-1, 1}$. To satisfy the exponential teachable condition, we need to select $\hat y$ and $\gamma$ such that
\begin{eqnarray*}
  \begin{cases}
    1 - \hat y \inner{w}{\hat x} >0   \\
    0<-\hat y\gamma \le \frac{1}{\eta}\\
    \abr{\gamma} \le \frac{R}{\nbr{w-w^\ast}}\\
  \end{cases}\Rightarrow
  \begin{cases}
    \hat y\gamma \inner{w}{w - w^\ast} <1   \\
    -\frac{1}{\eta}\le \hat y\gamma <0\\
    \abr{\gamma} \le \frac{R}{\nbr{w-w^\ast}}\\
  \end{cases}\Rightarrow
  \begin{cases}
    \inner{w}{w - w^\ast} >-1   \\
    -\frac{1}{\eta}\le \hat y\gamma <0\\
    \abr{\gamma} \le \frac{R}{\nbr{w-w^\ast}}\\
  \end{cases}.
\end{eqnarray*}
If $\nbr{w^\ast}\le 1$, we can show
\begin{eqnarray*}
\inner{w}{w^\ast}\le \nbr{w}\nbr{w^\ast} \le \nbr{w} < 1 + \nbr{w}^2,
\end{eqnarray*}
where the last inequality comes from the fact $1 + a^2 -a >0$, and thus, we have $\inner{w}{w - w^\ast} >-1$. Therefore, we select any configuration of $\hat y$ and $\gamma$ satisfying
\begin{eqnarray*}
   -\frac{1}{\eta}\le \hat y\gamma <0,\quad\text{and}\quad\abr{\gamma} \le \frac{R}{\nbr{w-w^\ast}}.
\end{eqnarray*}
Particularly, we set $\hat y = -1$ and $\gamma = \min\cbr{\frac{1}{\eta}, \frac{R}{\nbr{w^0-w^\ast}}}$.

\end{proof}

\begin{proposition}
Logistic loss is exponentially teachable in synthesis-based teaching if $\nbr{w^\ast}\le 1$.
\end{proposition}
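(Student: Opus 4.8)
The plan is to reuse the template established for the hinge loss: for every $w\neq w^\ast$ I will exhibit a label $\hat y\in\cbr{-1,1}$ and a scale $\gamma\neq 0$ with $\abr{\gamma}\le\frac{R}{\nbr{w-w^\ast}}$ such that, with $\hat x=\gamma\rbr{w-w^\ast}$, the learning intensity $\gamma\nabla_{\inner{w}{\hat x}}\ell\rbr{\inner{w}{\hat x},\hat y}$ is sandwiched between a strictly positive constant and $\frac{1}{\eta}$; the claim then follows immediately from Theorem~\ref{thm:opt_synthesis}.

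First I would record the learning intensity of the logistic loss $\ell\rbr{\inner{w}{x},y}=\log\rbr{1+\exp\rbr{-y\inner{w}{x}}}$, namely $\beta_{(\inner{w}{x},y)}=\nabla_{\inner{w}{x}}\ell=\frac{-y}{1+\exp\rbr{y\inner{w}{x}}}$. Substituting $\hat x=\gamma\rbr{w-w^\ast}$ and choosing $\hat y=-1$, so that $\inner{w}{\hat x}=\gamma\inner{w}{w-w^\ast}$, gives $\gamma\beta_{(\inner{w}{\hat x},\hat y)}=\frac{\gamma}{1+\exp\rbr{-\gamma\inner{w}{w-w^\ast}}}$. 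The upper bound is immediate: the denominator is at least $1$, so this quantity is at most $\gamma$, and hence any $\gamma\le\frac{1}{\eta}$ already enforces $\gamma\beta_{(\inner{w}{\hat x},\hat y)}\le\frac{1}{\eta}$.

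The substantive step is the uniform strictly positive lower bound, and this is exactly where the hypothesis $\nbr{w^\ast}\le 1$ is used. I would bound the exponent by Cauchy--Schwarz: $\inner{w}{w-w^\ast}=\nbr{w}^2-\inner{w}{w^\ast}\ge\nbr{w}^2-\nbr{w}\nbr{w^\ast}\ge\nbr{w}^2-\nbr{w}\ge-\frac{1}{4}$, the last inequality being the minimum of the scalar quadratic $t^2-t$. Consequently $\exp\rbr{-\gamma\inner{w}{w-w^\ast}}\le\exp\rbr{\gamma/4}$ uniformly over $w$, so $\gamma\beta_{(\inner{w}{\hat x},\hat y)}\ge\frac{\gamma}{1+\exp\rbr{\gamma/4}}=:\nu(\gamma)>0$. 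As in the preceding proofs, the constraint $\abr{\gamma}\le\frac{R}{\nbr{w^t-w^\ast}}$ for all $t$ reduces to $\abr{\gamma}\le\frac{R}{\nbr{w^0-w^\ast}}$ because $\nbr{w^t-w^\ast}$ is non-increasing under the omniscient teacher, so I would finally take $\gamma=\min\cbr{\frac{1}{\eta},\frac{R}{\nbr{w^0-w^\ast}}}$, which simultaneously yields $0<\nu(\gamma)\le\gamma\beta_{(\inner{w}{\hat x},\hat y)}\le\frac{1}{\eta}$ for every $w$. The exponential synthesis-based teachability condition of Theorem~\ref{thm:opt_synthesis} is then met, giving an $\Ocal\rbr{C_1^{\gamma,\eta}\log\frac{1}{\epsilon}}$ sample complexity. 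The only real obstacle is recognizing that the uniform positive lower bound on the intensity would fail if $\inner{w}{w-w^\ast}$ could be driven arbitrarily negative, and that $\nbr{w^\ast}\le 1$ is precisely the condition capping how negative it can become.
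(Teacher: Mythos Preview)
Your proof is correct and follows essentially the same template as the paper's: choose $\hat y=-1$, $\hat x=\gamma(w-w^\ast)$, bound the learning intensity above by $\gamma$, and bound it below by controlling $\inner{w}{w-w^\ast}$ via Cauchy--Schwarz together with $\nbr{w^\ast}\le 1$. The only difference is that the paper uses the weaker inequality $\inner{w}{w-w^\ast}>-1$ (from $1+t^2-t>0$), yielding $\nu(\gamma)=\frac{\gamma}{1+\exp(\gamma)}$, whereas you use the sharper $\inner{w}{w-w^\ast}\ge-\tfrac{1}{4}$ (the exact minimum of $t^2-t$), yielding the tighter constant $\nu(\gamma)=\frac{\gamma}{1+\exp(\gamma/4)}$; this does not change the structure of the argument, only improves the resulting $C_1^{\gamma,\eta}$.
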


\begin{proof} For the logistic loss, \ie,
$$
\ell\rbr{\inner{w}{x}, y} = \log\rbr{1 + \exp(-y\inner{w}{x})},
$$
its gradient is
$$
\nabla_w\ell\rbr{\inner{w}{x}, y} = -\frac{yx}{1 + \exp(y\inner{w}{x})}.
$$
Denote $\hat x = \gamma\rbr{w - w^\ast}$, we have $\beta_{\inner{w}{\hat x}, \hat y} = -\frac{\hat y}{1 + \exp(\hat y\inner{w}{\hat x})}$ where $\hat y \in \cbr{-1, 1}$. To satisfy the exponential teachable condition, we need to select $\hat y$ and $\gamma$ such that
\begin{eqnarray*}
  \begin{cases}
    0<-\frac{\hat y\gamma}{1 + \exp(\hat y\inner{w}{\hat x})}\le \frac{1}{\eta}\\
    \abr{\gamma} \le \frac{R}{\nbr{w-w^\ast}}\\
  \end{cases}.
\end{eqnarray*}
Particularly, we set $\hat y = -1$, we can fix the $\gamma$ by
\begin{eqnarray*}
    0< \frac{\gamma}{1 + \exp(\gamma)}<\frac{\gamma}{1 + \exp(\hat y\inner{w}{\hat x})}\le \gamma\le \frac{1}{\eta},\quad\text{and}\quad\abr{\gamma} \le \frac{R}{\nbr{w-w^\ast}}.
\end{eqnarray*}
The $\frac{\gamma}{1 + \exp(\gamma)}<\frac{\gamma}{1 + \exp(\hat y\inner{w}{\hat x})}$ is obtained by the monotonicity of $\exp(\cdot)$ and $\inner{w}{w - w^\ast} >-1$ when $\nbr{w^\ast}$. Therefore, we can choose $\gamma = \min\cbr{\frac{1}{\eta}, \frac{R}{\nbr{w^0-w^\ast}} }$, and thus, the lower bound $\nu =  \frac{\gamma}{1 + \exp(\gamma)}$.
\end{proof}

\paragraph{Proof of Corollary~\ref{cor:opt_combination}}
In each update, given the training sample $x\in\text{span}\rbr{\Xcal}$, we have $w^{t+1} = w^{t} - \eta\beta_{\inner{w}{x}, y}x$, therefore, the $\Delta_{t+1} w := w^{t+1} - w^0\in \text{span}\rbr{\Xcal}$. If $w^0-w^\ast \in \text{span}\rbr{\Xcal}$, $w^{t+1} - w^\ast \in \text{span}\rbr{\Xcal}$, which means by linear combination, we can construct $\hat\gamma\sum_{i=1}^n\alpha_i^t x_i = \gamma \rbr{w^t - w^\ast}$. With the condition that the loss function is exponentially synthesis-based teachable, we achieve the conclusion that the combination-based omniscient teacher will converge at least exponentially with the same rate to the synthesis-based teaching.
\QED

\paragraph{Proof of Theorem~\ref{thm:opt_pool}}
The proof is similar to the synthesis-based case. However, we introduce the consideration of the effect of pool-based teaching. Specifically, we first obtain a virtual training sample in full space, and then, we generate the sample from the candidate pool to mimic the virtual sample.

With the condition $w^0 - v^\ast\in \text{span}\rbr{\Dcal}$, as we discussed in the proof of Corollary~\ref{cor:opt_combination}, in every iteration, $w^t - v^\ast\in \text{span}\rbr{\Dcal}$. Therefore, we only need to consider in the space of $\text{span}\rbr{\Dcal}$. Meanwhile, since the teacher can rescale the sample, without loss of generality, we assume if $x\in \Xcal$, then $-x\in\Xcal$ to make the rescaling is always positive.

At $t$-step, as the loss is exponentially synthesis-based teachable with $\gamma$, therefore, we have the virtually constructed sample $\cbr{x_v, y_v}$ where $x_v = \gamma \rbr{w^t - w^\ast}$ with $\gamma$ satisfying the condition of exponentially teachable in synthesis-based settings, we first rescale the candidate pool $\Xcal$ such that
$$
\forall x\in \Xcal, \gamma_x\nbr{x} =\nbr{x_v} = \gamma\nbr{w^t - w^\ast}.
$$
We denote the rescaled candidate pool as $\Xcal_t$, under the condition of rescalable pool-based teachability, there is a sample $\cbr{\hat x, \hat y}\in \Xcal \times \Ycal$ with scale factor $\hat \gamma$ such that
\begin{eqnarray*}
\min_{\rbr{x, y}\in \Xcal_t\times \Ycal} &&\eta^2 \|\nabla_{w^t} \ell\rbr{\inner{w^t}{x}, y}\|^2 - 2\eta\inner{w^t - w^\ast}{\nabla_{w^t} \ell\rbr{\inner{w^t}{x}, y}}\\
&\le& \eta^2 \beta_{\inner{w^t}{\hat\gamma \hat x}, \hat y}^2\nbr{\hat x}^2 - 2\eta \beta_{\inner{w^t}{\hat\gamma \hat x}, \hat y}\langle w^t - w^\ast, \hat\gamma \hat x \rangle.
\end{eqnarray*}
We decompose the $\hat\gamma \hat x = a x_v + {x_v}_\perp$ with $a = \frac{\inner{\hat\gamma \hat x}{{x_v}}}{\nbr{{x_v}}^2}$. and ${x_v}_\perp = \hat\gamma \hat x - a{x_v}$. Then, we have
\begin{eqnarray*}
\min_{\rbr{x, y}\in \Xcal_t\times \Ycal} &&\eta^2 \|\nabla_{w^t} \ell\rbr{\inner{w^t}{x}, y}\|^2 - 2\eta\inner{w^t - w^\ast}{\nabla_{w^t} \ell\rbr{\inner{w^t}{x}, y}}\\
&\le& \eta^2 \beta_{\inner{w^t}{\hat\gamma \hat x}, \hat y}^2\nbr{\hat x}^2 - 2\eta \beta_{\inner{w^t}{\hat\gamma \hat x}, \hat y}\langle w^t - w^\ast, \hat\gamma \hat x \rangle\\
&=&\eta^2 \beta_{\inner{w^t}{\hat\gamma \hat x}, \hat y}^2\gamma^2\nbr{w - w^\ast}^2 - 2\eta \beta_{\inner{w^t}{\hat\gamma \hat x}, \hat y}\langle w^t - w^\ast, a {x_v} + {x_v}_\perp\rangle\\
&=&\eta^2 \beta_{\inner{w^t}{\hat\gamma \hat x}, \hat y}^2\gamma^2\nbr{w - w^\ast}^2 - 2\eta \beta_{\inner{w^t}{\hat\gamma \hat x}, \hat y}\gamma a\nbr{w^t - w^\ast}^2.
\end{eqnarray*}
Under the condition
$$
0< {\gamma\beta_{\inner{w}{\gamma \frac{w - w^\ast}{\hat x}}, \hat y}}< \frac{{2\Vcal(\Xcal)}}{\eta},
$$
we denote $\nu\rbr{\gamma} = \min_{w, \hat x\in\Xcal, \hat y\in \Ycal} \gamma\beta_{\inner{w}{\gamma \frac{w - w^\ast}{\hat x}}, \hat y}>0$ and $\mu\rbr{\gamma} = \max_{w, \hat x\in \Xcal, \hat y\in \Ycal}\gamma\beta_{\inner{w}{\gamma \frac{w - w^\ast}{\hat x}}, \hat y}<\frac{2\Vcal(\Xcal)}{\eta}$.

we have the recursion
$$
\nbr{w^{t+1} - w^\ast}_2^2 \le r(\eta, \gamma)\nbr{w^{t} - w^\ast}_2^2,
$$
with $r(\eta, \gamma, \Vcal(\Xcal)) := {\max\cbr{1 + \eta^2\mu\rbr{\gamma}^2 - 2\eta\mu\rbr{\gamma} \Vcal(\Xcal), 1 + \eta^2\nu\rbr{\gamma}^2 - 2\eta\nu\rbr{\gamma} \Vcal(\Xcal)}}$ and $0\le r(\eta, \gamma) <1$. Therefore, the algorithm converges exponentially
$$
\nbr{w^{t} - w^\ast}_2 \le {r\rbr{\eta, \gamma}}^{t/2}\nbr{w^{0} - w^\ast}_2,
$$
in other words, the student needs $2\rbr{\log\frac{1}{r\rbr{\eta, \gamma, \Vcal(\Xcal)}}}^{-1}\log\frac{\|w^0 - w^\ast\|}{\epsilon}$ samples to achieve an $\epsilon$-approximation of $w^\ast$. For clearity, we define the constant term as $C_2^{\eta, \gamma, \Vcal(\Xcal)}=2\rbr{\log\frac{1}{r\rbr{\eta, \gamma, \Vcal(\Xcal)}}}^{-1}$.
\QED

\clearpage
\newpage

\section{Detailed Experimental Setting}\label{appendix:exp}

\begin{table*}[h]
\centering
\footnotesize
\begin{tabular}{|c|c|c|c|c|c|}
\hline
Layer & CNN-6 & CNN-9 & CNN-12 \\
\hline\hline
Conv1.x  & [3$\times$3, 16]$\times$2 & [3$\times$3, 16]$\times$3 & [3$\times$3, 16]$\times$4  \\\hline
Pool1&\multicolumn{3}{c|}{2$\times$2 Max, Stride 2}\\\hline
Conv2.x  & [3$\times$3, 32]$\times$2 & [3$\times$3, 32]$\times$3 & [3$\times$3, 32]$\times$4 \\\hline
Pool2 & \multicolumn{3}{c|}{2$\times$2 Max, Stride 2}\\\hline
Conv3.x & [3$\times$3, 64]$\times$2  & [3$\times$3, 64]$\times$3 & [3$\times$3, 64]$\times$4 \\\hline
Pool3 & \multicolumn{3}{c|}{2$\times$2 Max, Stride 2}\\\hline
FC1 & 32 & 32 & 32 \\\hline
\end{tabular}
\caption{Our standard CNN architectures for CIFAR-10. Conv1.x, Conv2.x and Conv3.x denote convolution units that may contain multiple convolution layers. E.g., [3$\times$3, 16]$\times$3 denotes 3 cascaded convolution layers with 16 filters of size 3$\times$3. The CNNs learning ends at 20K iterations with multi-step rate decay.}\label{netarch}
\end{table*}

\paragraph{General Settings}
We have used three linear models in the experiments. In specific, the formulation of ridge regression (RR) is
$$
\min_{w\in\mathbb{R}^d,b\in\mathbb{R}}\frac{1}{n}\sum_{i=1}^{n}\frac{1}{2}(w^Tx_i+b-y_i)^2+\frac{\lambda}{2}\nbr{w}^2
$$
The formulation of logistic regression (LR) is
$$
\min_{w\in\mathbb{R}^d,b\in\mathbb{R}}\frac{1}{n}\sum_{i=1}^{n}\log(1+\exp\{-y_i(w^Tx_i+b)\})+\frac{\lambda}{2}\nbr{w}^2
$$
The formulation of support vector machine (SVM) is
$$
\min_{w\in\mathbb{R}^d,b\in\mathbb{R}}\frac{1}{n}\sum_{i=1}^{n}\max(1-y_i(w^Tx_i+b),0)+\frac{\lambda}{2}\nbr{w}^2
$$

\paragraph{Comparison of different teaching strategies}
We use a linear regression model (ridge regression with $\lambda=0$) for this experiment. We set $R$ as 1 and uniformly generate 30 data points as our knowledge pool for the teacher. In this first case, we set the feature dimension as 2, while in the second case, feature dimension is 70. The learning rate is set as 0.0001 for pool-based teaching, same as BGD and SGD.
\paragraph{Experiments on Gaussian data}
Specifically, RR is run on training data $(\bm{x}_i,y)$ where each entry in $\bm{x}_i$ is Gaussian distributed and $y = \inner{\bm{w}^*}{\bm{x}_i}+\epsilon$. LR and SVM are run on $\{\mathcal{X}_1,+1\}$ and $\{\mathcal{X}_2,-1\}$ where $\bm{x}_i\in\mathcal{X}_1$ is Gaussian distributed in each entry and $+1,-1$ are the labels. Specifically, we use the 10-dimension data that is Gaussian distributed with $(0.5,\cdots,0.5)$ (label $+1$) and $(-0.5,\cdots,-0.5)$ (label $-1$) as mean and identity matrix as covariance matrix. We generate 1000 training data points for each class. Learning rate for the same feature space is 0.0001, while learning rate for different feature spaces are 0.00001. $\lambda$ is set as 0.00005.
\paragraph{Experiments on uniform spherical data} We first generate the training data that are uniformly distributed on a unit sphere $\thickmuskip=2mu \medmuskip=2mu\|\bm{x}_i\|_2=1$. Then we set the data points on half of the sphere ($(0,\pi]$) as label $+1$ and the other half ($(\pi,2\pi]$) as label $-1$. All the generated data points are 2D. For the scenario of different features, we use a random orthogonal projection matrix to generate the teacher's feature space from student's. Learning rate for the same feature space is 0.001, while learning rate for different feature spaces are 0.0001. $\lambda$ is set as 0.00005.
\paragraph{Experiments on MNIST dataset}
We use 24D random features (projected by a random matrix $\mathbb{R}^{784\times 24}$) for the MNIST dataset. The learning rate for all the compared methods are 0.001. Note that, we generate the teacher's features using a random projection matrix ($\mathbb{R}^{24\times 24}$) from the original 24D student's features. $\lambda$ is set as 0.00005.

\paragraph{Experiments on CIFAR-10 dataset} The learning rate for all the compared methods are 0.001. $\lambda$ is set as 0.00005. The goal is to learn the $\mathbb{R}^{32\times 10}$ fully connected layer, which is also the classifiers for 10 classes. The three network we use in the experiments are shown as follows:

\paragraph{Experiments on infant ego-centric dataset} We manually crop and label all the objects that the child is holding for this experiments. For feature extraction, we use VGG-16 network that is pre-trained on Imagenet dataset. Then we use PCA to reduce the 4096 dimension to 64 dimension. We train a multi-class logistic regression to classify the objects. Note that, the omniscient teacher is also applied to train the logistic regression model. The learning rate is set to 0.001 for both SGD and omniscient teacher.

\clearpage
\newpage

\section{Comparison of different teaching strategies}
\label{appendix:diffstra}

\begin{figure}[h]
  \centering
  \footnotesize
  \includegraphics[width=4in]{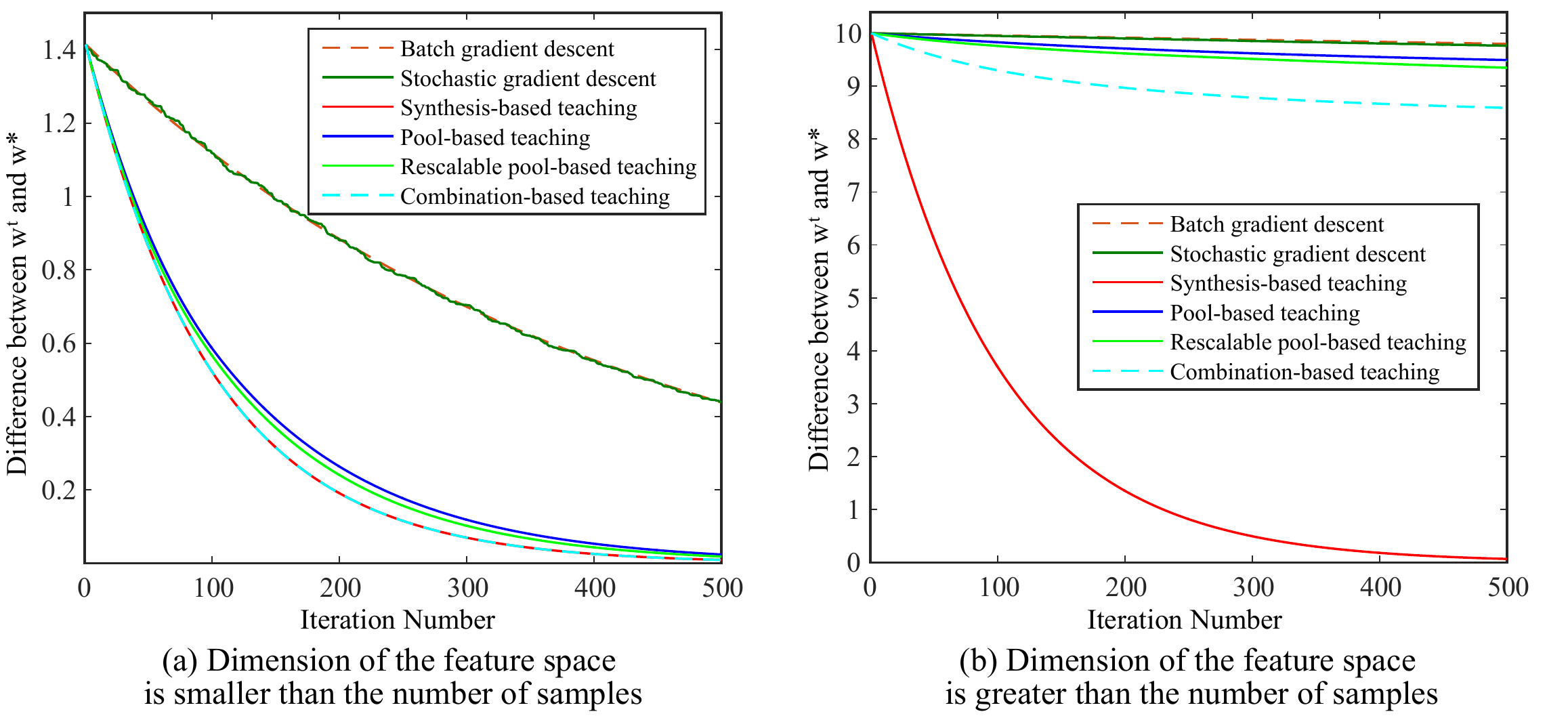}
  \vspace{-3mm}
  \caption{Comparison of different teaching strategies.}\label{synt}
\end{figure}
We first compare four different teaching strategies for the omniscient teacher. We consider two scenarios. One is that the dimension of feature space is smaller than the number of samples (the given features are sufficient to represent the entire feature), and the other is that the feature dimension is greater than the number of samples (the given features are not sufficient to represent the entire feature). In these two scenarios, we find that synthesis-based teaching usually works the best and always achieves exponential convergence. The combination-based teaching is exactly the same as the synthesis-based teaching in the first scenario, but it is much worse than synthesis in the second scenario. Rescalable pool-based teaching is also better than pool-based teaching. Empirically, the experiment verifies our theoretical findings: the more flexible the teaching strategy is, the more convergence gain we may obtain.

\clearpage
\newpage

\section{More experiments on MNIST dataset}
\label{appendix:mnist}
We provide more experimental results on MNIST dataset. Fig. \ref{mnist_vis_79} shows the selected examples from 7/9 binary digit classification. The results further verify the teacher models tend to select easy examples at first and gradually shift their focuses to difficult examples, very much resembling the human learning. Fig. \ref{mnist_diffw} shows the difference between the current model parameter and the optimal model parameter over iterations. It also shows that our teachers achieve faster convergence.

\begin{figure}[h]
  \centering
  \footnotesize
  \includegraphics[width=4.3in]{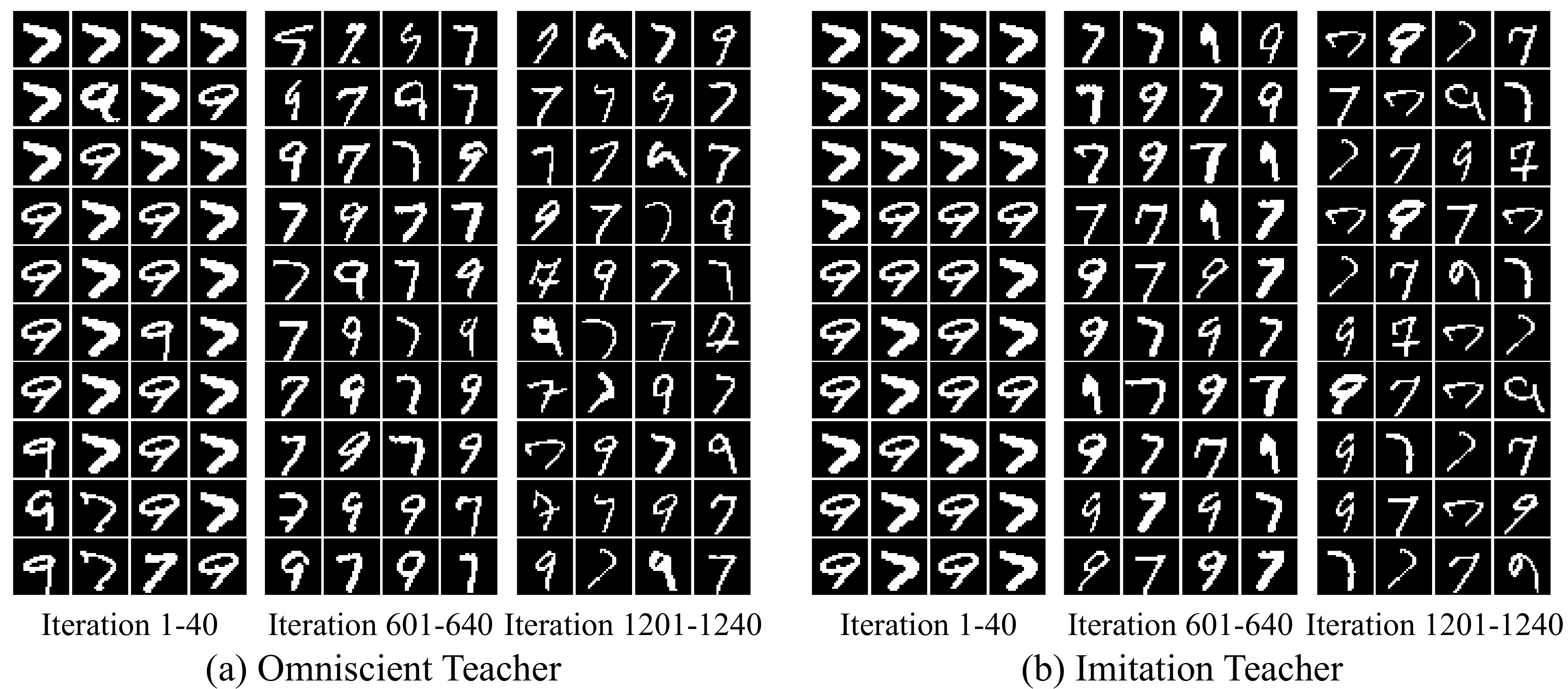}
  \vspace{-1mm}
  \caption{Selected training examples during iteration. (7/9 classification)}\label{mnist_vis_79}
\end{figure}

\begin{figure}[h]
  \centering
  \footnotesize
  \includegraphics[width=4in]{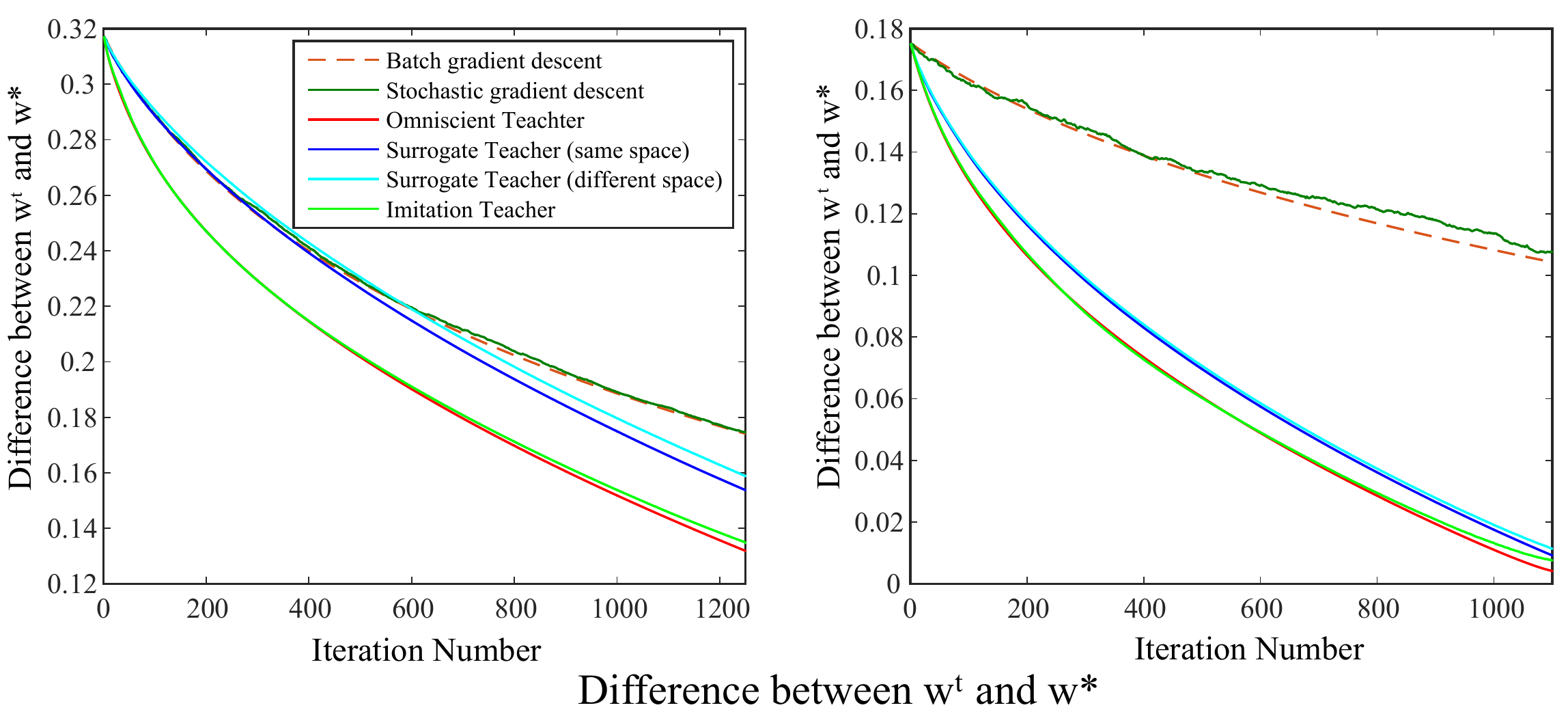}
  \vspace{-1mm}
  \caption{Teaching logistic regression on MNIST dataset. Left column: 0/1 classification. Right column: 3/5 classification}\label{mnist_diffw}
\end{figure}

\clearpage
\newpage

\section{Teaching linear models on uniform spherical data}
\label{appendix:spherical}

In this experiment, we use a different data distribution to further evaluate the teacher models. We will examine LR and SVM by classifying uniform spherical data.

\par
\textbf{Teaching in the same feature space.} From Fig. \ref{exp2}, one can observe that the convergence is consistently improved while using omniscient teacher to provide examples to learners. We find that the significance of improvement is related to the training data distribution and loss function, as indicated by our theoretical results. The surrogate teacher produces less convergence gain in SVM, because the convexity lower bound becomes very loose in this case. Overall, omniscient teacher still presents strong teaching capability. More interestingly, we use simple SGD run on the sample set selected by the omniscient teacher and also get faster convergence, showing that the selected example set is better than the entire set in terms of convergence.
\par
\textbf{Teaching in different feature spaces.} While the teacher and student use different feature spaces, one can observe from Fig. \ref{exp2} that the surrogate teacher performs very poorly, even worse than the original SGD and BGD. The imitation teacher works much better and achieves consistent and significant convergence speedup, showing its superiority while the teacher and the student use different features.

\begin{figure*}[h]
  \centering
  \footnotesize
  \includegraphics[width=6.8in]{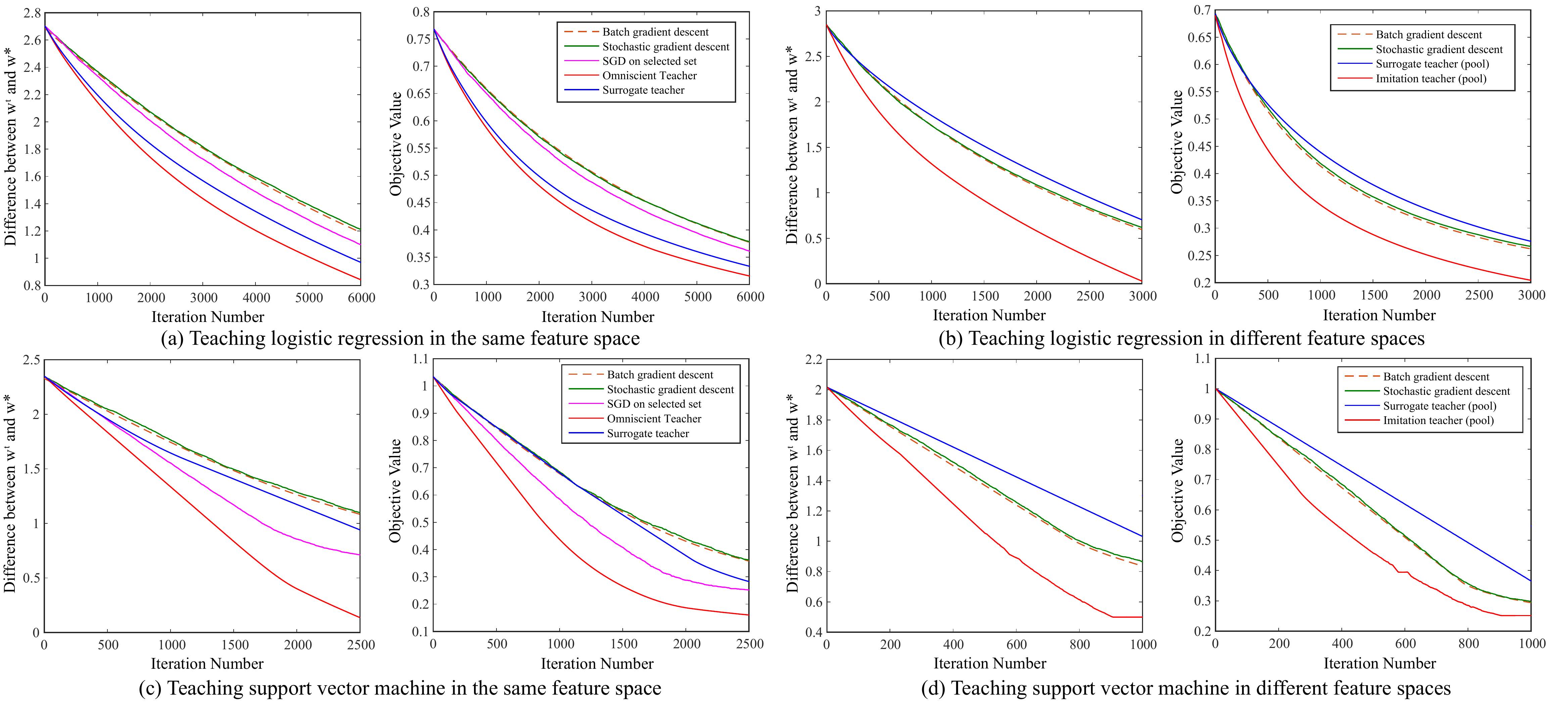}
  \caption{Convergence results on uniform spherical data.}\label{exp2}
\end{figure*}

\clearpage
\newpage

\section{Object learning experiment on children's ego-centric visual data}
\label{appendix:infant}

We experiment with a dataset capturing children and parents interacting with toys in a naturalistic setting~\cite{yurovsky2013}. These interactions are recorded for around 10.5 minutes with a camera worn low on the child's forehead. The head-camera's visual field was 90 degrees wide, providing a broad view of objects visible to the infant. The camera was attached to a headband that was tightened so that it did not move once set on the child. To calibrate the camera, the experimenter noted when the child focused on an object and adjusted the camera until the object was in the center of the image in the control monitor.

For our experiments, we selected interactions of 4 one year old infants. For each parent-child dyad, we annotated the bounding box location and category of the toy attended to by the infant at each frame. There are 10 objects in total: doll (34 frames), toy (53 frames), duck (335 frames), frog (2108 frames), helicopter (169 frames), horse (42 frames), mickey (472 frames), phone (394 frames), sheep (119 frames) and tiger (266 frames). We use a VGG-16 network that is pre-trained on Imagenet dataset as our feature extraction. We first extract the 4096D features from these images and then use PCA to reduce the dimension to 64D. Finally, we run our omniscient teacher on these ego-centric data.

One can observe from Fig. \ref{infant_converge} that our omniscient teacher achieves faster convergence than the random teacher. Moreover, we give part of the selected training examples of random teacher and omniscient teacher in Fig. \ref{infant_vis_rand} and Fig. \ref{infant_vis_omtea}, respectively. We visualize the selected samples every 50 iterations from the first iteration to the 10000th iteration. Interestingly, we find that the training samples that are selected by the omniscient teacher consist of contiguous bouts of experience with the same object instance, unlike the random teacher. The adjacent samples are similar and the object changes in a smooth way. These inputs are qualitatively similar in their ordering to the actual visual experiences of infants in our study, as illustrated in Fig.~\ref{infant_vis_natural}. This can be seen as partial algorithmic confirmation of the desirable structural properties of children's natural learning environment, which emphasizes a smooth and continuous evolution of visual experience, in sharp contrast to random sample selection.

\begin{figure}[h]
  \centering
  \footnotesize
  \includegraphics[width=6.8in]{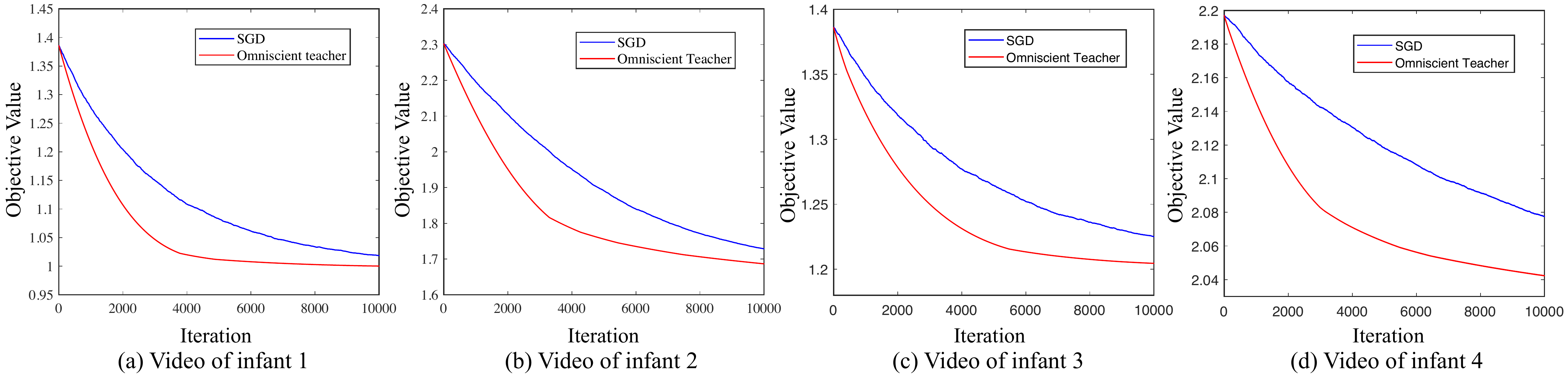}
  \vspace{-2mm}
  \caption{Convergence comparison on infant ego-centric visual data.}\label{infant_converge}
\end{figure}

\begin{figure}[h]
  \centering
  \footnotesize
  \includegraphics[width=4.5in]{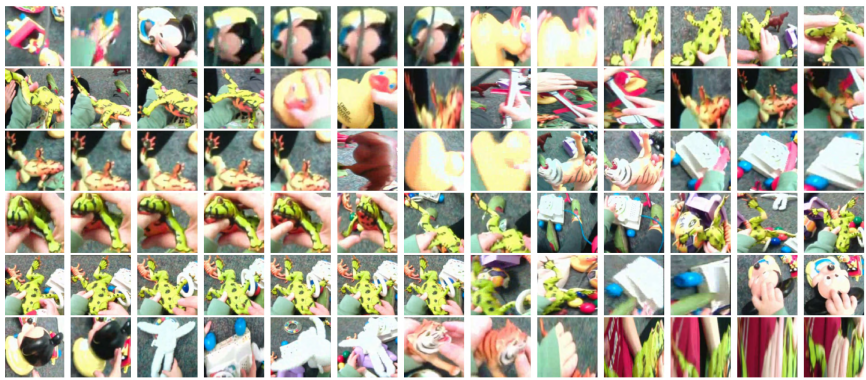}
  \vspace{-2mm}
  \caption{Training examples corresponding to the natural sequence of objects experienced by a single infant in our study.}\label{infant_vis_natural}
\end{figure}

\begin{figure}[h]
  \centering
  \footnotesize
  \includegraphics[width=6.8in]{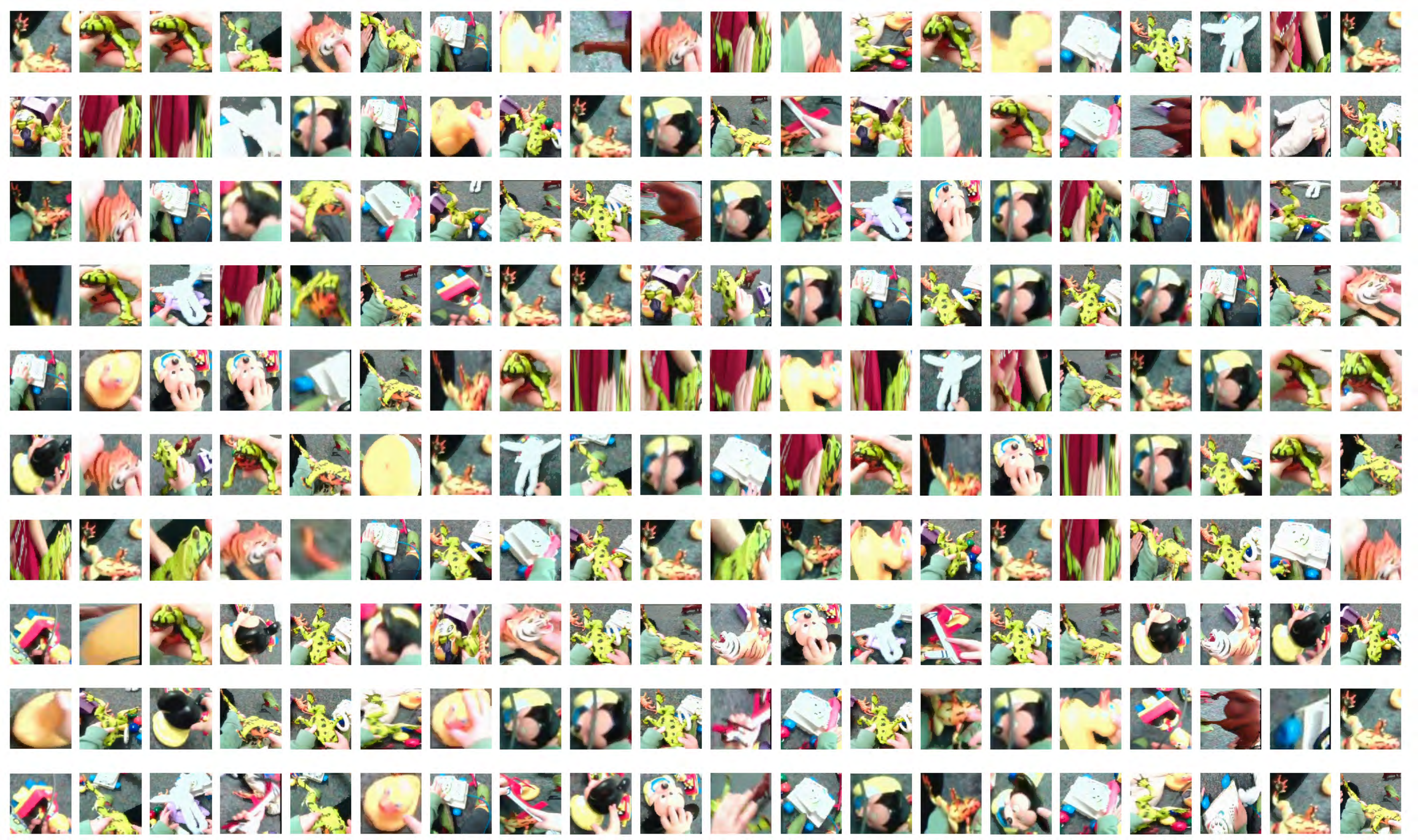}
  \vspace{-2mm}
  \caption{Training examples selected by the random teacher (Stochastic gradient descent).}\label{infant_vis_rand}
\end{figure}
\begin{figure}[h]
  \centering
  \footnotesize
  \includegraphics[width=6.8in]{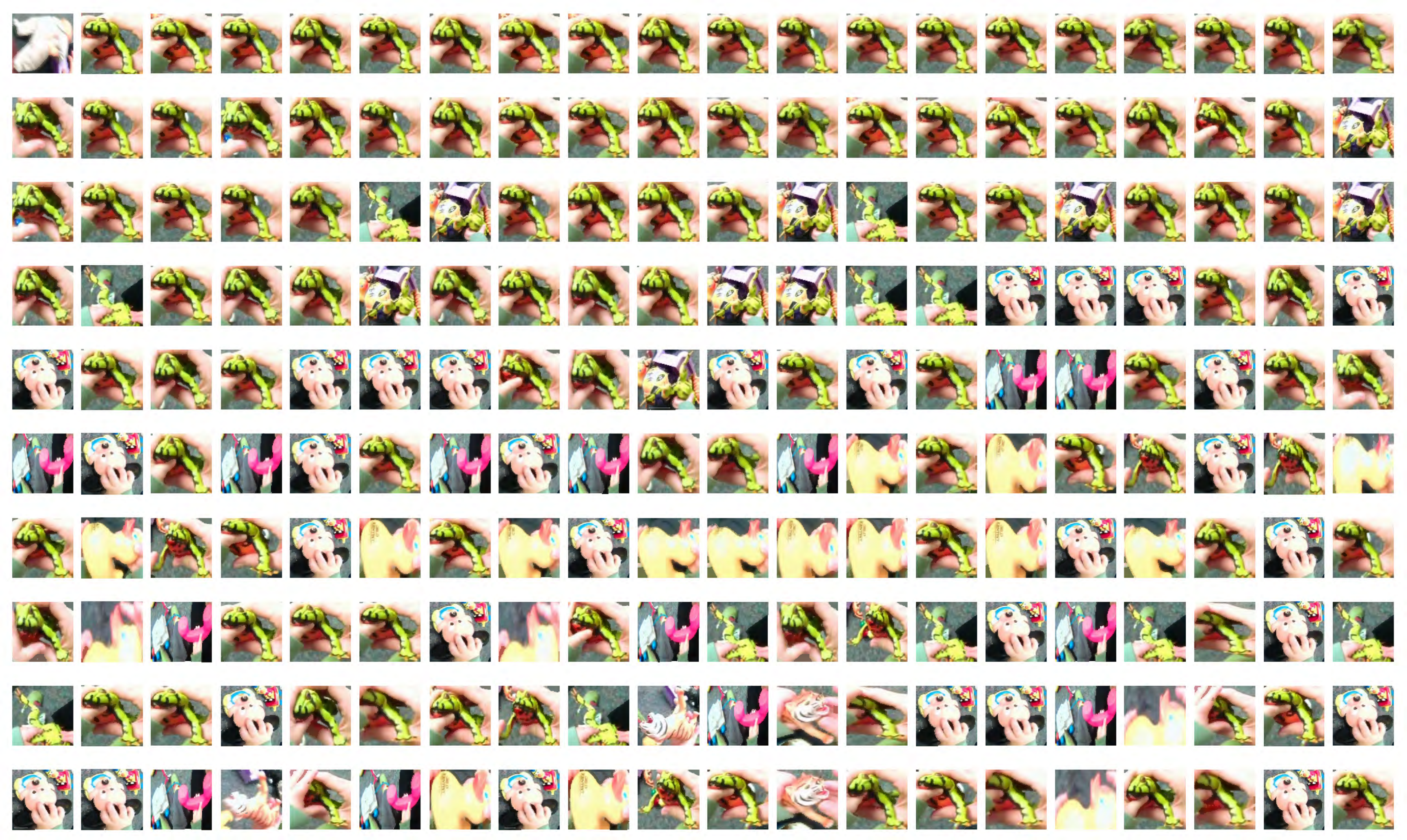}
  \vspace{-2mm}
  \caption{Training examples selected by the omniscient teacher.}\label{infant_vis_omtea}
\end{figure}

\end{document}